\newtheorem*{Theorem*}{Theorem}
\newtheorem{Lemma}{Lemma}
\newtheorem*{Lemma*}{Lemma}
\newtheorem{Definition}{Definition}
\newtheorem{prop}{Proposition}
\newtheorem{cor}{Corollary}
\newcommand{\RR}{\mathbb{R}}
\newcommand{\NN}{\mathbb{N}}
\newcommand{\G}{\mathcal{G}}
\newcommand{\F}{\mathcal{F}}
\newcommand{\NNplus}{\NN_{+}}
\newcommand{\NNplusstar}{\NN^*_{+}}
\newcommand{\vl}{\vec{\ell}}
\newcommand{\Xc}{\bar{X}}
\newcommand{\xc}{\bar{x}}
\newcommand{\Fnew}[1]{ \G_{#1} }
\newcommand{\Fnewtilde}[1]{ \tilde{\G}_{#1} }
\newcommand{\Fgen}[2]{\langle \G_{#1},{#2} \rangle}
\newcommand{\SO}{\mathrm{SO}(3)}
\newcommand{\tensor}{\mathrm{tensor}}
\newcommand{\Ffeat}{\mathcal{F}_{\mathrm{feat}}}
\newcommand{\Fpool}{\mathcal{F}_{\mathrm{pool}}}
\newcommand{\Ftfn}{\mathcal{F}^{\mathrm{TFN}}}
\newcommand{\Wfeat}{W_{\mathrm{feat}}}
\newcommand{\Wtarget}{W_T}
\newcommand{\T}{\mathcal{T}}
\newcommand{\WT}{W_T}
\newcommand{\ffeat}{f_{\mathrm{feat}}}
\newcommand{\CG}{\mathcal{C}_G}
\newcommand{\PG}{\mathcal{P}_G}
\def\eqref#1{equation~\ref{#1}}
\def\1{\bm{1}}
\def\vl{{\bm{l}}}
\DeclareMathAlphabet{\mathsfit}{\encodingdefault}{\sfdefault}{m}{sl}
\SetMathAlphabet{\mathsfit}{bold}{\encodingdefault}{\sfdefault}{bx}{n}
\def\gQ{{\mathcal{Q}}}
\def\gY{{\mathcal{Y}}}
\newcommand{\R}{\mathbb{R}}
\title{On the Universality of Rotation Equivariant Point Cloud Networks}
\author{Nadav Dym \\
Duke University\\
\texttt{nadav.dym@duke.edu} \\
\And
Haggai Maron \\
NVIDIA Research \\
\texttt{hmaron@nvidia.com} \\
}
\begin{document}

\maketitle

\begin{abstract}
	%
	
	Learning functions on point clouds 
	has applications in many fields, including computer vision, computer graphics, physics, and chemistry. Recently, there has been a growing interest in neural architectures that are invariant or equivariant to all three shape-preserving transformations of point clouds: translation, rotation, and permutation. 
	In this paper, we present a first study of the approximation power of these architectures. We first derive two sufficient conditions for an equivariant architecture to have the universal approximation property, based on a novel characterization of the space of equivariant polynomials. We then use these conditions to show that two recently suggested models \citep{thomas2018tensor,fuchs2020se} are universal, and for devising two other novel universal architectures.
\end{abstract}

\section{ Introduction}
Designing neural networks that respect data symmetry is a powerful approach for obtaining efficient deep models. Prominent examples being convolutional networks which respect the translational invariance of images, graph neural networks which respect the permutation invariance of graphs \citep{gilmer2017neural,maron2018invariant},  networks such as \citep{zaheer2017deep,qi2017pointnet} which respect the permutation invariance of sets, and networks which respect 3D rotational symmetries \citep{Welling2018,Weiler2018,esteves2018learning,worrall2018cubenet,kondor2018clebsch}.
\begin{wrapfigure}[9]{r}{0.4\textwidth}
	\includegraphics[width=0.38\textwidth]{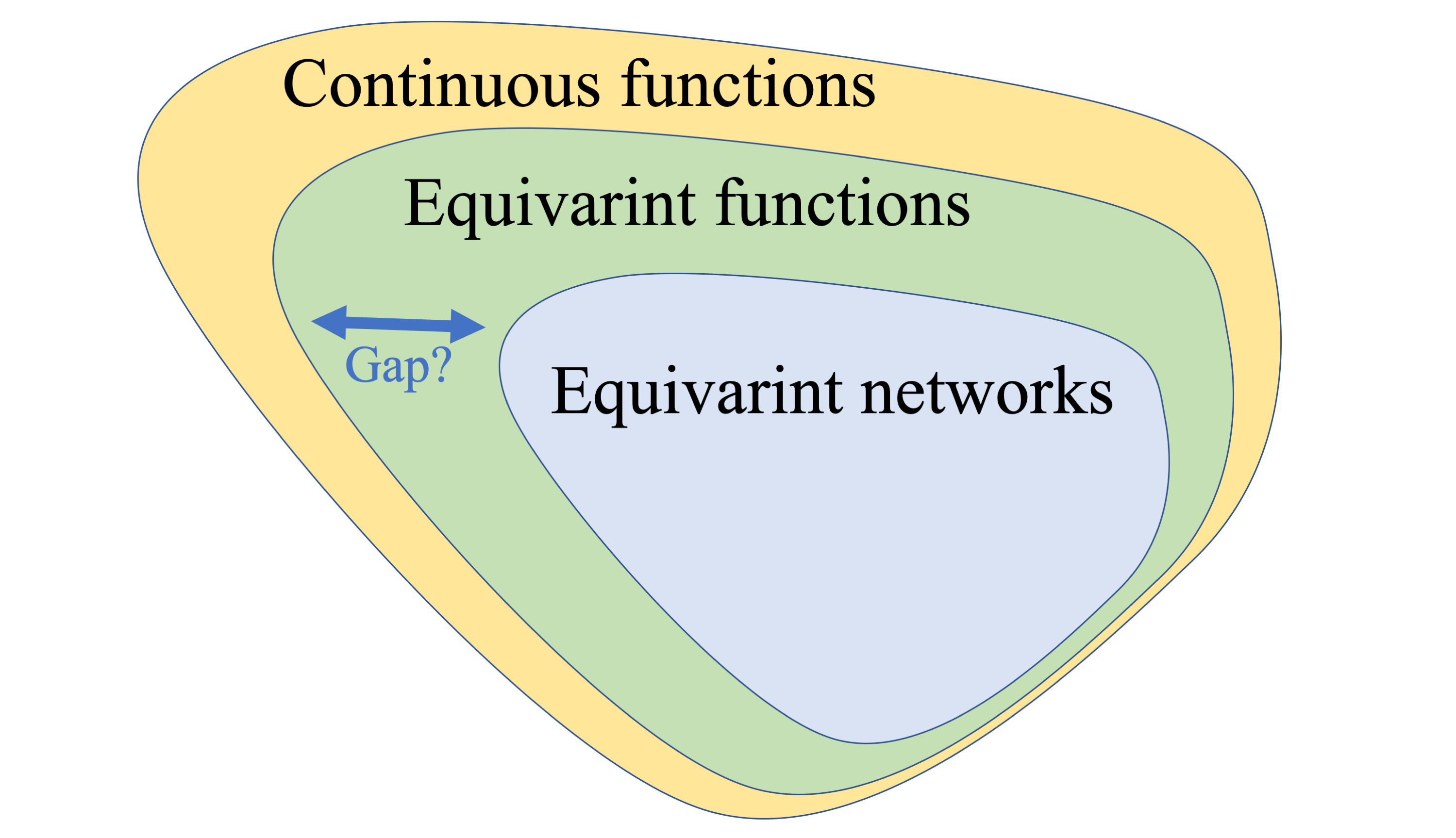}
\end{wrapfigure}
While the expressive power of equivariant models is reduced by design to include only equivariant functions, a  desirable property of equivariant networks is universality: the ability to approximate any continuous equivariant function. This is not always the case: while convolutional networks and networks for sets are universal \citep{yarotsky2018universal,segol2019universal}, popular graph neural networks are not  \citep{xu2018how,morris2018weisfeiler}. 

In this paper, we consider the universality of networks that respect the symmetries of 3D  point clouds: translations,  rotations, and permutations. Designing such networks is a popular paradigm in recent years \citep{thomas2018tensor,fuchs2020se,poulenard2019effective,zhao2019quaternion}. While there have been many works on the universality of permutation invariant networks \citep{zaheer2017deep,maron2019universality,keriven2019universal}, and a recent work discussing the universality of rotation equivariant networks \citep{bogatskiy2020lorentz},  this is a first paper which discusses the universality of networks which combine rotations, permutations and translations.





We start the paper with a general, architecture-agnostic,  discussion, and derive two sufficient conditions for universality. These conditions are a result of a  novel characterization of equivariant polynomials for the symmetry group of interest.
We use these conditions in order to prove  universality of the prominent \emph{Tensor Field Networks} (TFN) architecture \citep{thomas2018tensor,fuchs2020se}. The following is a weakened  and simplified statement of Theorem~\ref{thm:TFN} stated later on in the paper: 
\begin{Theorem*}[Simplification of Theorem~\ref{thm:TFN}]\label{thm:abridged}
	Any continuous equivariant function on point clouds can be approximated uniformly on compact sets by a composition of TFN layers. 
\end{Theorem*}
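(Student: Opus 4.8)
The plan is to verify that the family of TFN networks meets the two sufficient conditions for universality established in the architecture-agnostic part of the paper; granting those results, the whole proof reduces to checking two things about TFN: that scalar-output TFN networks approximate, uniformly on compacta, every continuous $\SO \times S_n$-invariant and translation-invariant function on point clouds, and that TFN networks generate, in the sense of the polynomial characterization, a spanning set of the module of equivariant polynomials. Since a TFN layer acts only on relative positions $x_i - x_j$, translation invariance is built in, so one may work throughout with the centered cloud; together with a Stone--Weierstrass argument this collapses the analytic content into a single algebraic claim: the set of polynomial maps representable by finite compositions of TFN layers contains the explicit finite generating set furnished by the characterization of equivariant polynomials.

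First I would isolate the geometric primitives available inside a single TFN layer. A TFN message to point $i$ has the form $m_i = \sum_{j} \varphi_\ell\!\big(\|x_i - x_j\|\big)\, Y^{(\ell)}\!\big(\widehat{x_i - x_j}\big) \otimes_{\mathrm{cg}} h_j$, together with a pointwise self-interaction (a linear map across the $\SO$-irrep channels) and a pointwise nonlinearity on the scalar channels. Taking the learnable radial function $\varphi_\ell$ to be polynomial in $\|x_i - x_j\|^2$ — which a norm-MLP approximates, and approximating polynomials suffices for uniform approximation on compacta — the message block produces exactly the solid-harmonic factors $\|x_i - x_j\|^{2k}\, Y^{(\ell)}(x_i - x_j)$, Clebsch--Gordan-contracted against the incoming features and then summed over $j$. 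Every polynomial in a $3$-vector is a sum of such solid-harmonic pieces, so nothing is lost by restricting attention to them; in particular the $\ell = 0$ contraction of two degree-one features yields the Gram entries $\langle x_i, x_j\rangle$, and the higher Clebsch--Gordan contractions realize all $\SO$-irrep components of tensor products of features, which is the point that guarantees no representation type is missed.

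Then I would run an induction on polynomial degree. The base cases — constants (the $\ell=0$ seed features) and the input coordinates $x_i$ (the $\ell=1$ features) — are immediate. For the inductive step, any generator from the equivariant-polynomial characterization is, up to the permutation-symmetrizing sum over point indices, a product of a strictly-lower-degree equivariant tensor with one additional factor that is either a relative-position solid harmonic or an invariant scalar (a Gram entry, a triple-product determinant, or a product thereof). The solid-harmonic factor is supplied by the message block with a suitable polynomial radial function; the invariant scalars are supplied by the $\ell=0$ channels together with the pointwise nonlinearity, which lets one form arbitrary products of scalars; and the Clebsch--Gordan product realizes precisely the $\SO$-equivariant multiplication. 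Routing the lower-degree pieces through separate feature channels and composing a bounded number of layers (depending on the target degree) then produces the generator, and taking closures converts exact polynomial representation into the claimed uniform $\varepsilon$-approximation of arbitrary continuous equivariant functions.

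The main obstacle I anticipate is the bookkeeping in this last induction rather than any single deep fact. A TFN layer inextricably fuses a neighbor-sum with a Clebsch--Gordan product, so reproducing a target monomial that involves several distinct points, a prescribed irrep type, and a particular permutation-symmetrization pattern requires carefully tracking which intermediate quantities live on which channels across layers, and checking that nothing needed is annihilated — e.g. that the relevant Clebsch--Gordan coefficients are nonzero and that summation over neighbors does not collapse a term one is trying to keep separate. Stated abstractly, the crux is to show that the closure of the TFN-representable polynomials is an invariant-ring submodule that actually contains the generating set, rather than some strict equivariant subspace; this is exactly where the novel characterization of equivariant polynomials does the real work, since it exhibits a finite generating set each of whose elements is visibly of the ``solid harmonic times invariant scalar, then symmetrize'' shape that a TFN layer is designed to produce.
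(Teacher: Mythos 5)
Your overall strategy coincides with the paper's: reduce to the two sufficient conditions ($D$-spanning of the feature maps plus linear universality of the pooling maps), pass to polynomials by Stone--Weierstrass, realize homogeneous polynomials of relative positions through polynomial radial functions times spherical harmonics, and induct on degree. However, two points that are load-bearing in the paper are missing or misstated in your outline. The first concerns the generating set you induct over. The $D$-spanning condition asks the feature maps to span \emph{all} translation-invariant, permutation-equivariant polynomials $\RR^{3\times n}\to\RR^n$ --- with no $\SO$-equivariance imposed at that stage --- and the paper's concrete target is the tensor-lifted family $Q^{(\vec r)}_j(\Xc)=\sum_{i_2,\dots,i_K}\xc_j^{\otimes r_1}\otimes\xc_{i_2}^{\otimes r_2}\otimes\cdots\otimes\xc_{i_K}^{\otimes r_K}$ built from the permutation-equivariant characterization of \citep{segol2019universal}; $\SO$-equivariance is then recovered for free by averaging the final linear pooling maps over $\SO$ and invoking Schur's lemma. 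Your proposed generators (``lower-degree equivariant tensor times a solid harmonic or an invariant scalar such as a Gram entry or a determinant, then symmetrize'') presuppose a first-fundamental-theorem-style description of the $G$-equivariant module that you neither state precisely nor prove; determinants and Gram entries enter the paper only in the characterization of invariant linear functionals on tensor representations (Appendix F), not in the TFN argument. Relatedly, your use of the pointwise nonlinearity to form products of scalars is off-model: the paper's proof uses only convolutions and self-interactions and notes that the nonlinearities are formally redundant.

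The second point is that the step you defer as ``bookkeeping'' is where the actual proof lives. A TFN convolution outputs $\sum_b F(x_a-x_b)\otimes V_b$, so feeding in $V=Q^{(\tilde r)}(\Xc)$ together with a filter realizing $x^{\otimes r_1}$ yields $\sum_b(\xc_a-\xc_b)^{\otimes r_1}\otimes Q_b^{(\tilde r)}(\Xc)$, which equals $Q_a^{(\vec r)}(\Xc)$ \emph{plus} cross terms in which some tensor slots carry $\xc_b$ instead of $\xc_a$. These cross terms cannot be routed away by channel allocation; the paper eliminates them by a second, inner induction on the exponent $r_1$ (each cross term is a slot-permutation of some $Q^{(r')}$ with strictly smaller first coordinate, hence already constructed and subtractable), and handles the case $r_1=0$ by applying an averaging filter and exploiting that the centered coordinates $\xc_a$ have zero mean. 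Without this mechanism --- or some other argument that the fused neighbor-sum/tensor-product structure and the Clebsch--Gordan contractions do not annihilate or entangle the needed components --- your inductive step does not close.
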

We use our general discussion to prove the universality of two additional equivariant models: the first is a simple modification of the TFN architecture which allows for universality using only low dimensional filters. The second is a minimal architecture which is based on tensor product representations, rather than the more commonly used irreducible representations of $\SO$. We discuss the advantages and disadvantages of both approaches. 

To summarize, the contributions of this paper are: (1) A general approach for proving the universality of rotation equivariant models for point clouds; (2) A proof that two recent equivariant models \citep{thomas2018tensor,fuchs2020se} are universal; (3) Two additional simple and novel universal architectures.


\section{Previous work}\label{s:prev}

\textbf{Deep learning on point clouds.} \citep{qi2017pointnet,zaheer2017deep} were the first to apply neural networks directly to the raw point cloud data, by using pointwise functions and pooling operations. Many subsequent works used local neighborhood information  \citep{qi2017pointnet++, wang2019dynamic, atzmon2018point}. We refer the reader to a recent survey for more details  \citep{guo2020deep}.
In contrast with the aforementioned works which focused solely on permutation invariance, more related to this paper are works that additionally incorporated invariance to rigid motions. \citep{thomas2018tensor} proposed Tensor Field Networks (TFN) and showed their efficacy on physics and chemistry tasks.\citep{Kondor2018} also suggested an equivariant model for continuous rotations. \citep{li2019discrete} suggested models that are equivariant to discrete subgroups of $\SO$. \citep{poulenard2019effective} suggested an invariant model based on spherical harmonics.   \citep{fuchs2020se} followed TFN and added an attention mechanism. Recently, \citep{zhao2019quaternion} proposed a quaternion equivariant point capsule network that also achieves rotation and translation invariance. 


\textbf{Universal approximation for invariant networks.}
Understanding the approximation power of invariant models is a popular research goal. Most of the current results assume that the symmetry group is a permutation group. \citep{zaheer2017deep,qi2017pointnet,segol2019universal,maron2020learning,serviansky2020set2graph} proved universality for several $S_n$-invariant and equivariant  models. \citep{maron2018invariant,maron2019provably,keriven2019universal,maehara2019simple} studied the approximation power of high-order graph neural networks. \citep{maron2019universality,ravanbakhsh2020universal} targeted universality of networks that use high-order representations for permutation groups\citep{yarotsky2018universal} provided several theoretical constructions of universal equivariant neural network models based on polynomial invariants, including an $SE(2)$ equivariant model. In a recent work \citep{bogatskiy2020lorentz} presented a universal approximation theorem for networks that are equivariant to several Lie groups including $SO(3)$. The main difference from our paper is that we prove a universality theorem for a more complex group that besides rotations also includes translations and permutations.

\section{A framework for proving universality}\label{sec:framework}
In this section, we describe a framework for proving the universality of equivariant networks. We begin with some mathematical preliminaries:
\subsection{Mathematical setup} An action of a group $G$ on a real vector space $W$ is a collection of maps $\rho(g):W \to W $ defined for any $g\in G$, such that $\rho(g_1)\circ \rho(g_2)=\rho(g_1g_2) $ for all $g_1,g_2\in G $, and the identity element of $G$ is mapped to the identity mapping on $W$. We say $\rho$ is a \emph{representation} of $G$ if $\rho(g)$ is a linear map for every $g\in G$. As is customary, when it does not cause confusion we often say that $W$ itself is a representation of $G$ .

In this paper, we are interested in functions on point clouds. Point clouds are sets  of  vectors in $\R^3$ arranged as matrices:
$$X=(x_1,\ldots,x_n)\in \RR^{3 \times n} .$$ 
Many machine learning tasks on point clouds, such as classification, aim to learn a function which is invariant to rigid motions and relabeling of the points. Put differently, such functions are required to be invariant to the action of   $G=  \RR^3 \rtimes \SO   \times S_n$  on $\RR^{3 \times n} $ via
\begin{equation}\label{eq:action}
\rho_G(t,R,P) (X)=R(X-t1_n^T)P^T ,
\end{equation}
where $t\in \RR^3$ defines a translation, $R$ is a rotation and $P$ is a permutation matrix. 

Equivariant functions are generalizations of invariant functions:
If $G$ acts on $W_1$ via some action $\rho_1(g) $, and on $W_2$ via some other group action $\rho_2(g) $, we say that a function $f:W_1\to  W_2 $ is \emph{equivariant} if 
$$f(\rho_1(g)w)=\rho_2(g)f(w), \forall w \in W_1 \text{ and } g \in G. $$
Invariant functions correspond to the special case where $\rho_2(g) $ is the identity mapping for all $g \in G $. 

In some machine learning tasks on point clouds, the functions learned are not invariant but rather equivariant. For example, segmentation tasks assign a discrete label to each point. They are invariant to translations and rotations but equivariant to permutations -- in the sense that permuting the input causes a corresponding permutation of the output. Another example is predicting a normal for each point of a point cloud. This task is invariant to translations but equivariant to both rotations and permutations.  

In this paper, we are interested in learning equivariant functions from point clouds into $\Wtarget^n$, where $\Wtarget$ is some representation of $\SO$. The equivariance of these functions is with respect to the action $\rho_G $ on point clouds defined in \eqref{eq:action}, and the action of $G $ on $\WT^n$ defined by applying the rotation action from the left and permutation action from the right as in \ref{eq:action}, but `ignoring' the translation component. Thus, $G$-equivariant functions will be translation invariant.  This formulation of equivariance includes the normals prediction example by taking $\Wtarget=\RR^3$, as well as the segmentation case by setting $\Wtarget=\RR $ with the trivial identity representation. We focus on the harder case of functions into $\Wtarget^n$ which are equivariant to permutations, since it easily implies the easier case of permutation invariant functions to $\Wtarget$. 

\textbf{Notation.} We use the notation $\NNplus=\NN \cup \{0\} $ and $\NNplusstar=\bigcup_{r \in \NN}\NNplus^r $. We set $[D]=\{1,\ldots,D\} $ and $[D]_0=\{0,\ldots,D\} $. 

\textbf{Proofs.} Proofs appear in the appendices, arranged according to sections.
\subsection{Conditions for universality} 
\paragraph{The semi-lifted approach}
In general, highly expressive equivariant neural networks can be achieved by using a `lifted approach', where intermediate features in the network belong to high dimensional representations of the group. In the context of point clouds where typically $n\gg 3$, many papers, e.g.,  \citep{thomas2018tensor,kondor2018n,bogatskiy2020lorentz} use a `semi-lifted' approach, where hidden layers hold only higher dimensional representations of $\SO$, but not high order permutation representations. In this subsection, we propose a strategy for achieving universality with the semi-lifted approach. 

We begin by an axiomatic formulation of the semi-lifted approach (see illustration in inset): we assume that our neural networks are composed of  two main components: the first component is  a family  $\Ffeat$   of parametric continuous $G$-equivariant functions $\ffeat$ which map the original point cloud $\RR^{3 \times n} $ to a semi-lifted point cloud $\Wfeat^n $, where $\Wfeat$ is a lifted representation of $\SO$. 
\begin{wrapfigure}[10]{r}{0.5\textwidth}
	\vspace{-10pt}
	\includegraphics[width=0.45\textwidth]{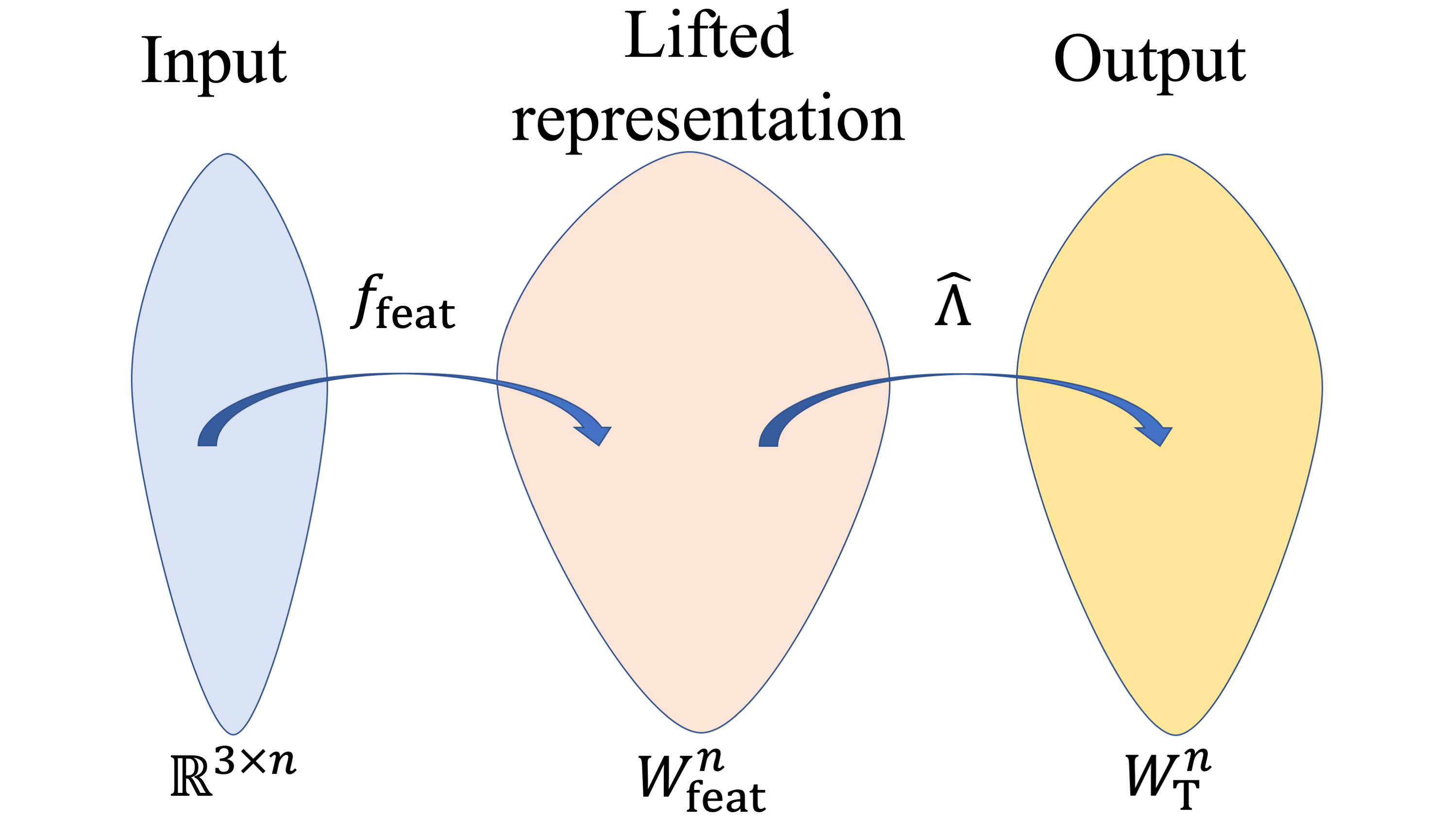}
\end{wrapfigure}
The second component is a family of parametric  linear $\SO$-equivariant functions $\Fpool$, which map from the high order representation $\Wfeat$ down to the target representation $\Wtarget $. Each such $\SO$--equivariant function $\Lambda:\Wfeat \to \Wtarget$ can be extended to a $\SO \times S_n $ equivariant function $\hat \Lambda:\Wfeat^n \to \Wtarget^n $ by applying $\Lambda$ elementwise.    For every  positive integer $C$, these two families of functions induce a family of functions $\F_C$ obtained by summing $C$ different compositions of these functions:   
\begin{equation}\label{eq:Fs}
\F_C(\Ffeat,\Fpool)=\{f| f(X)=\sum_{c=1}^C\hat \Lambda_c(g_c(X)),~   (\Lambda_c,g_c) \in \Fpool\times \Ffeat \}.
\end{equation}

\paragraph{Conditions for universality} We now describe sufficient conditions for universality using the semi-lifted approach. 
The first step is showing, as in \citep{yarotsky2018universal}, that continuous $G$-equivariant functions $\CG(\RR^{3 \times n}, \WT^n) $ can be approximated  by $G$-equivariant polynomials $\PG(\RR^{3 \times n}, \WT^n)$. 
\begin{restatable}{Lemma}{lemapprox}\label{lem:approx}
	Any continuous $G$-equivariant function in $ \CG(\RR^{3\times n}, \WT^n) $ can be approximated uniformly on compact sets by $G$-equivariant polynomials in $\PG(\RR^{3 \times n}, \WT^n)$.  
\end{restatable}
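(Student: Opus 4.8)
The plan is to follow the classical approximate--then--symmetrize recipe of \citep{yarotsky2018universal}, preceded by a reduction that disposes of the non-compact translation factor of $G$. Since any $f\in\CG(\RR^{3\times n},\WT^n)$ satisfies $f(X-t1_n^T)=f(X)$ for all $t\in\RR^3$, taking $t=\frac1n X1_n$ shows that $f$ depends only on the centered configuration $\pi(X):=X-\frac1n (X1_n)1_n^T$, which lies in the linear subspace $V_0=\{Y\in\RR^{3\times n}: Y1_n=0\}$. The map $\pi$ is linear and commutes with rotations and permutations (using $P^T1_n=1_n$), so $f=\tilde f\circ\pi$, where $\tilde f:=f|_{V_0}$ is continuous and equivariant for the now \emph{compact} group $H:=\SO\times S_n$ acting on $V_0$ by rotating and permuting columns. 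Conversely, for any $H$-equivariant polynomial $q:V_0\to\WT^n$, the composition $q\circ\pi$ is a $G$-equivariant polynomial in $\PG(\RR^{3\times n},\WT^n)$ (polynomial since $\pi$ is linear; translation-invariant since $\pi$ is; $\SO\times S_n$-equivariant since both $\pi$ and $q$ are). So it suffices to approximate $\tilde f$ uniformly on compact subsets of $V_0$ by $H$-equivariant polynomials.

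Next I would fix a compact $K\subseteq\RR^{3\times n}$ and enlarge $\pi(K)$ to $\hat K:=H\cdot\pi(K)\subseteq V_0$, which is compact (continuous image of the compact set $H\times\pi(K)$) and $H$-invariant by construction. By the vector-valued Stone--Weierstrass theorem, applied coordinatewise on $\hat K$, there is a polynomial map $p:V_0\to\WT^n$ with $\sup_{x\in\hat K}\|p(x)-\tilde f(x)\|<\eps$. Now symmetrize: fixing an $\SO$-invariant inner product on $\WT$ (available since $\SO$ is compact), the $H$-action $\rho_2$ on $\WT^n$ becomes orthogonal, and I set
\begin{equation}
q(x):=\int_H \rho_2(h)^{-1}\,p\bigl(\rho_1(h)x\bigr)\,d\mu(h),
\end{equation}
with $\mu$ the normalized Haar measure on $H$ and $\rho_1$ the action on $V_0$. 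A change of variables gives $H$-equivariance of $q$; and since $\rho_1(h)$ is linear in $x$ while the entries of $\rho_1(h)$ and $\rho_2(h)^{-1}$ depend continuously on $h$, every monomial coefficient of $q$ is the Haar integral of a continuous function of $h$, so $q$ is a genuine polynomial of degree at most $\deg p$.

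Finally, the error estimate. Because $\tilde f$ is $H$-equivariant, $\rho_2(h)^{-1}\tilde f(\rho_1(h)x)=\tilde f(x)$, hence $q(x)-\tilde f(x)=\int_H \rho_2(h)^{-1}\bigl[p(\rho_1(h)x)-\tilde f(\rho_1(h)x)\bigr]\,d\mu(h)$; since $\hat K$ is $H$-invariant (so $\rho_1(h)x\in\hat K$) and $\rho_2(h)^{-1}$ is orthogonal, we get $\sup_{x\in\hat K}\|q(x)-\tilde f(x)\|<\eps$, and in particular on $\pi(K)$. Pulling back through $\pi$ yields the claimed $G$-equivariant polynomial $\eps$-approximation of $f$ on $K$. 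The one genuinely nontrivial point in the argument is the initial translation reduction; once $G$ is replaced by the compact group $H$ this is the textbook compact-group symmetrization, with only the mild check that a Haar average of a polynomial-valued map is again a polynomial of no larger degree.
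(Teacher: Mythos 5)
Your proof is correct and follows essentially the same route as the paper's: reduce translation invariance by centering the point cloud, approximate by Stone--Weierstrass on an enlarged group-invariant compact set, and symmetrize over the compact group $\SO\times S_n$ via the Haar integral. The only differences are cosmetic technical choices (you verify polynomiality of the symmetrized map coefficientwise rather than via the paper's Riemann-sum/closedness argument, and you control the error with an invariant inner product rather than a uniform operator-norm bound), neither of which changes the argument.
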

Universality is now reduced to the approximation of $G$-equivariant polynomials. We provide two sufficient conditions which guarantee that  $G$-equivariant polynomials of degree $D$ can be expressed by function spaces $\F_C(\Ffeat,\Fpool)$ as defined in \eqref{eq:Fs}.


The first sufficient condition is that $\Ffeat$ is able to represent all polynomials which are translation invariant and permutation-equivariant (but not necessarily $\SO$- equivariant). More precisely: 
\begin{Definition}[$D$-spanning] For $D\in \NNplus$,  let $\Ffeat$ be a subset of $\CG(\RR^{3\times n},\Wfeat^n) $.  We say that $\Ffeat $ is \emph{$D$-spanning}, if there exist $f_1,\ldots,f_K \in \Ffeat $, such that every polynomial $p:\RR^{3 \times n} \to \RR^n $ of degree $D$ which is invariant to translations and equivariant to permutations, can be written as 
	\begin{equation}\label{eq:char}
	p(X)=\sum_{k=1}^K \hat \Lambda_k(f_k(X)), 
	\end{equation}
	where $\Lambda_k:\Wfeat \to \RR$ are all linear functionals, and $\hat \Lambda_k:\Wfeat^n\to \RR^n$ are the functions defined by elementwise applications of $\Lambda_k $.  
\end{Definition}
In Lemma~\ref{lem:Dspan} we provide a concrete condition which implies $D$-spanning.

The second sufficient condition is that the set of linear $\SO$ equivariant functionals $\Fpool$ contains all possible equivariant linear functionals:  
\begin{Definition}[Linear universality]
	We say that a collection $\Fpool$ of equivariant linear functionals between two representations  $\Wfeat$ and $\Wtarget$ of $\SO$ is linearly universal, if it contains all linear $\SO$-equivariant mappings between the two representations. 
\end{Definition}
When these two necessary conditions apply, a rather simple symmetrization arguments leads to the following theorem:
\begin{restatable}{Theorem}{thmgeneral}\label{thm:general}
	If $\Ffeat$ is $D$-spanning and $\Fpool$ is linearly universal, then there exists some $C(D)\in \NN $ such that for all $C \geq C(D) $ the  function space 
	$\F_C(\Ffeat,\Fpool) $
	contains all $G$-equivariant  polynomials of degree $\leq D$.
\end{restatable}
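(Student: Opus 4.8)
The strategy is a symmetrization argument. Let $p \in \PG(\RR^{3\times n}, \WT^n)$ be a $G$-equivariant polynomial of degree $\le D$. Since $p$ is in particular translation invariant and permutation equivariant, each of its $\WT$-valued coordinates is a polynomial into $\WT$; but a priori the $\SO$-equivariance couples the coordinates. The first step is to forget the $\SO$-equivariance: view $p$ as a polynomial $\RR^{3\times n}\to \WT^n$ that is translation invariant and permutation equivariant, decompose $\WT$ into a basis, and apply the $D$-spanning hypothesis to each scalar component. This produces functions $f_1,\dots,f_K \in \Ffeat$ and \emph{linear} (not necessarily $\SO$-equivariant) maps $M_k : \Wfeat \to \WT$ with $p(X) = \sum_{k=1}^K \hat M_k(f_k(X))$, where $\hat M_k$ is the elementwise extension. (A small point to check: the $D$-spanning definition is stated for $\RR^n$-valued targets; one gets the $\WT^n$-valued version by applying it coordinate-wise in a fixed basis of $\WT$ and collecting terms, so $K$ here is at most $\dim(\WT)$ times the constant from the definition.)

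The second step fixes up the equivariance by averaging over $\SO$. Because $p$ is $\SO$-equivariant and each $f_k$ is $G$-equivariant (hence $\SO$-equivariant), for every $R \in \SO$ we have
\begin{equation}\label{eq:symstep}
p(X) = \rho_{\WT^n}(R)^{-1}\, p(\rho_G(R)X) = \sum_{k=1}^K \widehat{\big(\rho_{\WT}(R)^{-1} M_k\, \rho_{\Wfeat}(R)\big)}\big(f_k(X)\big).
\end{equation}
Averaging \eqref{eq:symstep} over the Haar probability measure on $\SO$ (the integral makes sense as each matrix entry of $\rho_{\WT}(R)^{-1}M_k\rho_{\Wfeat}(R)$ is a bounded continuous function of $R$, and $\SO$ is compact) and exchanging the finite sum with the integral gives $p(X) = \sum_{k=1}^K \hat \Lambda_k(f_k(X))$ with $\Lambda_k := \int_{\SO} \rho_{\WT}(R)^{-1} M_k \rho_{\Wfeat}(R)\, dR$. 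The standard fact that this Haar average of a linear map intertwines the two representations — i.e. $\Lambda_k \rho_{\Wfeat}(R') = \rho_{\WT}(R')\Lambda_k$ for all $R'$ — shows each $\Lambda_k$ is a linear $\SO$-equivariant functional from $\Wfeat$ to $\WT$. By linear universality of $\Fpool$, every such $\Lambda_k$ lies in $\Fpool$.

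We now have $p(X) = \sum_{k=1}^K \hat\Lambda_k(f_k(X))$ with $(\Lambda_k, f_k) \in \Fpool \times \Ffeat$, which is precisely an element of $\F_K(\Ffeat, \Fpool)$. Setting $C(D) := K$ (note $K$ depends only on $D$ through the $D$-spanning constant and $\dim\WT$, not on the particular $p$), we get $p \in \F_C(\Ffeat,\Fpool)$ for all $C \ge C(D)$: for larger $C$ simply pad the sum with zero terms, which are attained since $0 \in \Fpool$ (it is trivially $\SO$-equivariant) and $\hat 0 \equiv 0$. Since $p$ was an arbitrary $G$-equivariant polynomial of degree $\le D$, this proves the claim.

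The only genuinely delicate point is the second step: one must justify that the Haar average of the non-equivariant linear maps $M_k$ is equivariant, and that the $f_k$ can be pulled outside the average unchanged. Both rest on the $\SO$-equivariance of the $f_k$ (so $f_k(\rho_G(R)X) = \rho_{\Wfeat}(R)f_k(X)$, letting us absorb the conjugation into $M_k$) and on compactness of $\SO$ (so the Haar average exists and the translation-invariance property $\int g(RR')\,dR = \int g(R)\,dR$ holds); everything else is bookkeeping. I would present the Haar-averaging lemma explicitly since it is the conceptual heart of the argument.
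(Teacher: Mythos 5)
Your proposal is correct and follows essentially the same route as the paper's proof: first apply the $D$-spanning property coordinate-wise to obtain a decomposition with non-equivariant linear maps, then symmetrize those maps by Haar averaging over $\SO$ (using the $\SO$-equivariance of $p$ and of the $f_k$ to absorb the conjugation into the linear maps), and invoke linear universality to conclude. Your explicit handling of the padding for $C \ge C(D)$ and of the extension of the $D$-spanning definition from $\RR^n$-valued to $\WT^n$-valued targets are details the paper treats more briefly, but the argument is the same.
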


\begin{wrapfigure}[6]{r}{0.4\textwidth}
	\vspace{-16pt}
	\includegraphics[width=0.36\textwidth]{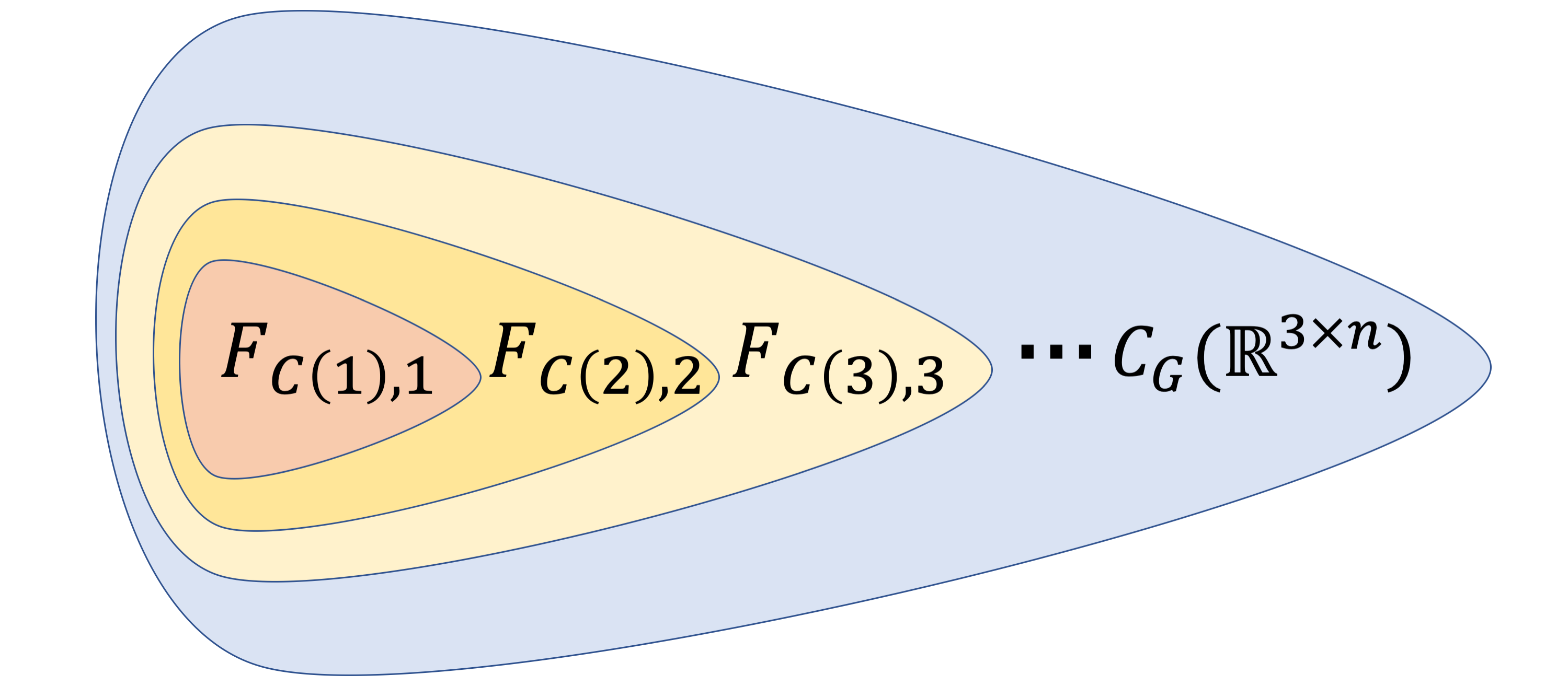}
\end{wrapfigure}
As a result of Theorem \ref{thm:general} and Lemma~\ref{lem:approx} we obtain our universality result (see inset for illustration)
\begin{cor}
	For all $C,D\in \NNplus$, let $\F_{C,D} $ denote  function spaces generated by a pair of functions spaces which are $D$-spanning and linearly universal as in \eqref{eq:Fs}. Then any continuous $G$-equivariant function in $ \CG(\RR^{3\times n}, \WT^n) $ can be approximated uniformly on compact sets by  equivariant functions in $$\F=\bigcup_{D \in \NN} \F_{C(D),D} .$$
\end{cor}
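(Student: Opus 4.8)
The plan is to derive the Corollary as an essentially immediate consequence of Theorem~\ref{thm:general} and Lemma~\ref{lem:approx}, treating it as a formal assembly of the two pieces rather than a new argument. First I would fix $C,D\in\NNplus$ and let $\F_{C,D}=\F_C(\Ffeat,\Fpool)$ for some pair $(\Ffeat,\Fpool)$ that is $D$-spanning and linearly universal; by Theorem~\ref{thm:general} there is a constant $C(D)$ so that $\F_{C(D),D}$ contains every $G$-equivariant polynomial of degree $\le D$. Hence the nested union $\F=\bigcup_{D\in\NN}\F_{C(D),D}$ contains $\PG(\RR^{3\times n},\WT^n)$ in its entirety: any $G$-equivariant polynomial has some finite degree $D_0$ and therefore lies in $\F_{C(D_0),D_0}\subseteq\F$.

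Next I would invoke Lemma~\ref{lem:approx}: given a target $f\in\CG(\RR^{3\times n},\WT^n)$, a compact set $K\subseteq\RR^{3\times n}$, and $\eps>0$, there is a $G$-equivariant polynomial $p$ with $\sup_{X\in K}\|f(X)-p(X)\|<\eps$. Since $p\in\F$ by the previous paragraph, $f$ is approximated uniformly on $K$ by an element of $\F$. As $K$ and $\eps$ were arbitrary, $\F$ is dense in $\CG(\RR^{3\times n},\WT^n)$ in the topology of uniform convergence on compact sets, which is exactly the claimed universality statement.

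The only genuine subtlety — and the step I would be most careful about — is bookkeeping around the parameter $C$: Theorem~\ref{thm:general} gives containment for all $C\ge C(D)$, and $C(D)$ is typically non-decreasing in $D$, so the sets $\F_{C(D),D}$ need not be nested in $D$ even though each is contained in $\F$. I would make explicit that the union is taken over the $D$-indexed family with the matching threshold $C(D)$, so that the inclusion $\F_{C(D_0),D_0}\subseteq\F$ used above is literally the definition of $\F$ and requires no monotonicity. One should also note that $\Wfeat$ and the pair $(\Ffeat,\Fpool)$ may in principle depend on $D$ (a larger degree may require a richer feature family); this does not affect the argument, since all we use is the existence, for each $D$, of some admissible pair realizing $D$-spanning and linear universality, together with the resulting containment of degree-$\le D$ polynomials. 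With that clarified, the proof is a two-line chaining of the Lemma and the Theorem and presents no real obstacle.
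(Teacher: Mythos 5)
Your proposal is correct and matches the paper's intent exactly: the corollary is stated as an immediate consequence of Theorem~\ref{thm:general} and Lemma~\ref{lem:approx}, and your two-step chaining (polynomials of degree $\le D$ lie in $\F_{C(D),D}$, and polynomials are dense) is precisely that argument. The extra care you take with the $C(D)$ bookkeeping is a sensible clarification but introduces nothing beyond the paper's reasoning.
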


\subsection{Sufficient conditions in action}\label{sub:action}
In the remainder of the paper, we prove the universality of several $G$-equivariant architectures, based on the framework we discussed in the previous subsection. We discuss two different strategies for achieving universality, which differ mainly in the type of lifted representations of $\SO$  they use: (i) The first strategy uses  (direct sums of) tensor-product representations;  (ii) The second uses (direct sums of) irreducible representations. The main advantage of the first strategy from the perspective of our methodology is that achieving the $D$-spanning property is more straightforward. The advantage of irreducible representations is that they almost automatically guarantees the linear universality property. 

In Section~\ref{sec:tensor} we discuss universality through tensor product representations, and give an example of a minimal tensor representation network architecture that would satisfy universality. In section~\ref{sec:irreducible} we discuss universality through irreducible representations, which is currently the more common strategy. We show that the TFN architecture \citep{thomas2018tensor,fuchs2020se} which follows this strategy is universal, and describe a simple tweak that achieves universality using only low order filters, though the representations throughout the network are high dimensional.

\section{Universality with tensor representations}\label{sec:tensor}

In this section, we prove universality for models that are based on tensor product representations, as defined below. The main advantage of this approach is that $D$-spanning is achieved rather easily. The main drawbacks are that its data representation is somewhat redundant and that characterizing the linear equivariant layers is more laborious. 
\begin{wrapfigure}[6]{r}{0.4\textwidth}
	\includegraphics[width=0.38\textwidth]{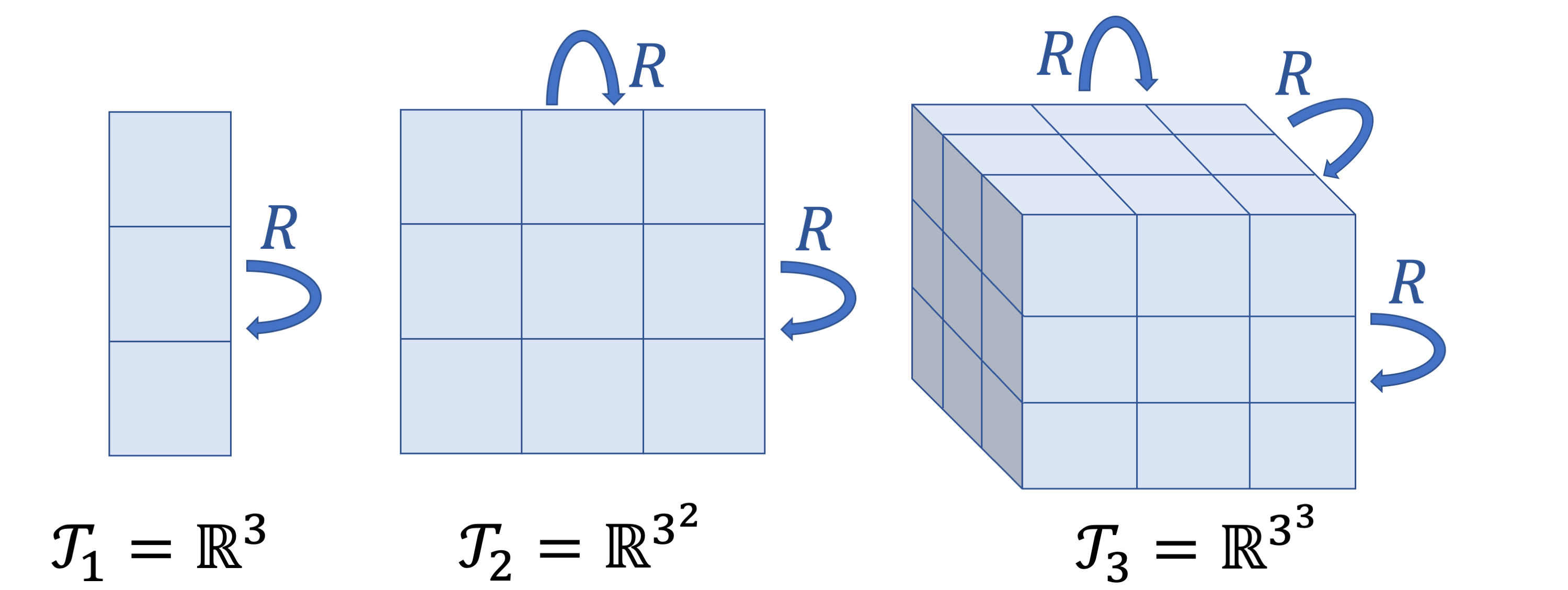}
\end{wrapfigure}

\textbf{Tensor representations} We begin by defining tensor representations. For $k\in \NNplus$ denote $\T_k=\RR^{3^k} $. $\SO$ acts on $\T_k $  by the tensor product representation, i.e., by applying the matrix Kronecker product $k$ times: $\rho_k(R):=R^{\otimes k}$. The inset illustrates the vector spaces and action for  $k=1,2,3$. With this action, for any $i_1,\ldots,i_k\in [n] $, the map from $\RR^{3 \times n}$ to $\T_k$ defined by 
\begin{equation}\label{eq:out}
(x_{1},\ldots,x_{n}) \mapsto  x_{i_1}\otimes x_{i_2}\ldots \otimes x_{i_k} 
\end{equation}
is $\SO$ equivariant.

\textbf{A $D$-spanning family} We now show that tensor representations can be used to define  a finite set of  $D$-spanning functions. The lifted representation $\Wfeat$ will be given by 
$$\Wfeat^{\T}=\bigoplus_{T=0}^D \T_T . $$
The $D$-spanning functions are indexed by vectors $\vec{r}=(r_1,\ldots,r_K) $, where each $r_k $ is a non-negative integer. The functions $Q^{(\vec{r})}=(Q_j^{(\vec{r})})_{j=1}^n $ are  defined for fixed $j\in [n]$ by
\begin{equation}\label{Q}
Q_j^{(\vec{r})}(X)=\sum_{i_2,\ldots,i_K=1}^n x_j^{\otimes r_1}\otimes x_{i_2}^{\otimes r_2} \otimes  x_{i_3}^{\otimes r_3}\otimes \ldots \otimes  x_{i_K}^{\otimes r_K}.
\end{equation}
The functions $Q_j^{(\vec{r})}$ are $\SO$ equivariant as they are a sum of  equivariant functions from \eqref{eq:out}. Thus $Q^{(\vec r)} $ is $\SO \times S_n $ equivariant.  The motivation behind the definition of these functions is that known characterizations of permutation equivariant polynomials \citep{segol2019universal} tell us that the entries of these tensor valued functions span all permutation equivariant polynomials (see the proof of Lemma~\ref{lem:Qspanning} for more details). To account for translation invariance, we compose the functions $Q^{(\vec{r})}$ with the centralization operation and  define the set of functions
\begin{equation}\label{eq:gQ} 
\gQ_D=\{ \iota\circ Q^{(\vec{r})}(X-\frac{1}{n}X1_n1_n^T)| \, \|\vec{r}\|_1 \leq D  \}, 
\end{equation}
where $\iota$ is the natural embedding that takes each $\T_T$ into $\Wfeat^{\T}=\bigoplus_{T=0}^D \T_T  $. In  the following lemma, we prove that this set is $D$-spanning.
\begin{restatable}{Lemma}{lemQspanning}\label{lem:Qspanning}
	For every $D \in \NNplus$, the set $\gQ_D $ is $D$-spanning. 
\end{restatable}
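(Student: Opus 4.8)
The plan is to combine the classical description of permutation-equivariant polynomials \citep{segol2019universal} with the elementary observation that linear functionals applied to a tensor power $x^{\otimes m}$ realize \emph{all} homogeneous polynomials of degree $m$ in $x\in\RR^3$. First I would dispose of translation invariance. If $p:\RR^{3\times n}\to\RR^n$ is translation invariant and permutation equivariant of degree $\le D$, then translating by the centroid $t=\tfrac1n X1_n$ and using translation invariance gives $p(X)=p\big(X-\tfrac1n X1_n1_n^T\big)=p(\Xc)$, where $\Xc$ has columns $\xc_j=x_j-\tfrac1n\sum_i x_i$. Hence it suffices to reproduce $p(\Xc)$ for an arbitrary permutation-equivariant polynomial $p$ of degree $\le D$, which is exactly what the centralized functions in $\gQ_D$ are built to do.

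Next, I would invoke the structural theorem for permutation-equivariant polynomials: any such $p$ of degree $\le D$ is a finite linear combination of polynomials whose $j$-th coordinate has the form $x_j^{\alpha_1}\prod_{\ell=2}^{K}\sum_{i=1}^n x_i^{\alpha_\ell}$ with multi-indices $\alpha_1,\dots,\alpha_K\in\NNplus^3$ satisfying $\sum_{\ell}|\alpha_\ell|\le D$ (the case $|\alpha_1|=0$ accounting for permutation-invariant summands). Substituting $\Xc$, each such term becomes $\xc_j^{\alpha_1}\prod_{\ell\ge2}\sum_i\xc_i^{\alpha_\ell}$. Now set $\vec{r}=(|\alpha_1|,\dots,|\alpha_K|)$, so $\|\vec{r}\|_1\le D$; by \eqref{Q} we have $Q_j^{(\vec{r})}(\Xc)=\xc_j^{\otimes r_1}\otimes\big(\sum_{i_2}\xc_{i_2}^{\otimes r_2}\big)\otimes\cdots\otimes\big(\sum_{i_K}\xc_{i_K}^{\otimes r_K}\big)\in\T_{\|\vec{r}\|_1}$, which $\iota$ embeds into $\Wfeat^{\T}$. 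Since any monomial $y^{\alpha}$ with $|\alpha|=m$ equals $\mu_\alpha(y^{\otimes m})$ for a suitable linear functional $\mu_\alpha$ on $\T_m$, the product functional $\Lambda=\mu_{\alpha_1}\otimes\cdots\otimes\mu_{\alpha_K}$ on $\T_{\|\vec{r}\|_1}$ (extended by zero on the other summands of $\Wfeat^{\T}$) satisfies $\hat\Lambda\big(\iota\circ Q^{(\vec{r})}(\Xc)\big)_j=\xc_j^{\alpha_1}\prod_{\ell\ge2}\sum_i\xc_i^{\alpha_\ell}$. Summing these identities over the finitely many terms in the expansion of $p$ — there are only finitely many relevant $\vec{r}$ once the redundant zero entries in positions $\ell\ge2$ are removed, each contributing only a constant factor $n$, and every linear functional is a sum of product functionals — writes $p(X)$ in the form $\sum_k\hat\Lambda_k(f_k(X))$ with $f_k\in\gQ_D$ and $\Lambda_k$ linear, which is precisely the definition of $D$-spanning.

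The step I expect to require the most care is the invocation of the permutation-equivariant polynomial characterization: one must state it precisely and, crucially, verify that for a polynomial of degree $\le D$ only power sums $\sum_i x_i^{\alpha_\ell}$ of total degree $\le D$ are needed, so that each $\vec{r}$ arising indeed satisfies $\|\vec{r}\|_1\le D$ and the functions used genuinely lie in $\gQ_D$; this amounts to running the decomposition homogeneous piece by homogeneous piece. Here one would cite \citep{segol2019universal}, adapting their description of equivariant polynomials from scalar features to the $3$-dimensional features $x_i\in\RR^3$. The remaining ingredients are routine: the centralization step is a one-line substitution, and the tensor-functional matching only uses that coordinate functionals on $(\RR^3)^{\otimes m}$ extract arbitrary degree-$m$ monomials.
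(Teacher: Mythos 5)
Your proposal is correct and follows essentially the same route as the paper: both reduce to the Segol--Lipman spanning set of permutation-equivariant polynomials, handle translation invariance by centralizing, and realize each basis polynomial $p^j_{\vec\alpha}$ as a linear functional applied to $Q_j^{(\vec r)}$ with $r_k=|\alpha_k|$ (your tensor product of coordinate functionals $\mu_{\alpha_1}\otimes\cdots\otimes\mu_{\alpha_K}$ is just an explicit construction of the functional the paper asserts to exist). The degree-bookkeeping point you flag is handled in the paper exactly as you propose, by noting that degree-$\le D$ equivariant polynomials are spanned by the $p^j_{\vec\alpha}$ with $\sum_k|\alpha_k|\le D$.
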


\textbf{A minimal universal architecture}
Once we have shown that $\gQ_D$ is  $D$-spanning, we can design $D$-spanning architectures, by devising architectures that are able to span all elements of $\gQ_D$. As we will now show, the compositional nature of neural networks allows us to do this in a very clean manner.

We define a parametric function $f(X,V|\theta_1,\theta_2) $ which maps $\RR^{3\times n}\oplus \T_k^n $ to $\RR^{3\times n}\oplus \T_{k+1}^n$ as follows: For all $j \in [n]$, we have $f_j(X,V)=(X_j, \tilde V_j(X,V)) $, where

\begin{equation}\label{eq:minimal}
\tilde V_j(X,V|\theta_1,\theta_2)=\theta_1 \left( x_j \otimes V_j \right)+\theta_2 \sum_i \left( x_i \otimes V_i \right)  
\end{equation}

We denote the set of functions  $(X,V) \mapsto f(X,V|\theta_1,\theta_2) $ obtained by choosing the parameters $\theta_1,\theta_2 \in \R$, by $\F_{min}$ . While in the hidden layers of our network the data is represented using both coordinates $(X,V) $, the input to the network only contains an $X$ coordinate and the output only  contains a $V$ coordinate. To this end, we define the functions 
\begin{equation}\label{eq:ext}
\mathrm{ext}(X)=(X,1_n) \text{ and } \pi_V(X,V)=V.
\end{equation}
We can achieve $D$-spanning by composition of functions in $\F_{min}$ with these functions and centralizing:
\begin{restatable}{Lemma}{lemDtensor}\label{lem:Dtensor}
	The function set $\gQ_D$ is contained in 
	\begin{equation}\label{eq:tensor_feat}
	\Ffeat=\{\iota \circ \pi_V \circ f^1\circ f^2\circ \ldots \circ f^T\circ \mathrm{ext}(X-\frac{1}{n}X1_n1_n^T)| \, f^j \in \F_{min}, T \leq D \} .
	\end{equation}
	Thus $\Ffeat$ is $D$-spanning. 
\end{restatable}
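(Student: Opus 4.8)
The plan is to show that every function $\iota \circ Q^{(\vec r)}(\Xc)$ in $\gQ_D$ (where $\Xc = X - \frac1n X 1_n 1_n^T$ is the centralized point cloud) can be written in the form $\iota \circ \pi_V \circ f^1 \circ \cdots \circ f^T \circ \mathrm{ext}(\Xc)$ for a suitable sequence $f^1,\dots,f^T \in \F_{min}$ with $T = \|\vec r\|_1 \leq D$; once this is done, the second sentence follows immediately from Lemma~\ref{lem:Qspanning}, since $\Ffeat \supseteq \gQ_D$ and $\gQ_D$ is $D$-spanning (a superset of a $D$-spanning set is $D$-spanning, as the defining functions $f_1,\dots,f_K$ can be taken from the subset). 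Note that the centralization and the embedding $\iota$ already match between the two expressions, so the real content is a statement about the iterated map $f^1 \circ \cdots \circ f^T$ applied to $\mathrm{ext}(Y) = (Y, 1_n)$ for an arbitrary point cloud $Y \in \RR^{3\times n}$.

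The key step is an inductive claim: for a multi-index $\vec r = (r_1,\dots,r_K)$, reading its "unary expansion" as a word $w$ of length $T=\|\vec r\|_1$ over the alphabet $\{\text{`self'},\text{`sum'}\}$ — namely $r_1$ copies of `self', then $r_2$ copies of `sum' (or more precisely, alternating blocks matching the structure of \eqref{Q}) — I would choose each layer $f^t$ to be the member of $\F_{min}$ with $(\theta_1,\theta_2) = (1,0)$ when the $t$-th letter is `self' and $(\theta_1,\theta_2)=(0,1)$ when it is `sum'. Unwinding \eqref{eq:minimal}: a `self' layer sends the $V$-coordinate at node $j$ from $V_j$ to $x_j \otimes V_j$, and a `sum' layer sends it to $\sum_i x_i \otimes V_i$ (the $X$-coordinate is carried through unchanged by the first component $f_j(X,V) = (X_j, \tilde V_j)$). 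Starting from $V_j = 1 \in \T_0$, the `self' block of length $r_1$ builds $x_j^{\otimes r_1}$; then I need to arrange, via interleaved `sum' and `self' layers, that the node-$j$ value becomes $\sum_{i_2,\dots,i_K} x_j^{\otimes r_1} \otimes x_{i_2}^{\otimes r_2} \otimes \cdots \otimes x_{i_K}^{\otimes r_K}$, matching $Q_j^{(\vec r)}(Y)$ from \eqref{Q}. The precise bookkeeping is: a `sum' layer contracts the current per-node tensors into a single global tensor $\sum_i x_i \otimes V_i$ which is then broadcast to every node, and subsequent `self' layers tensor each node's $x_j$ onto the left; to produce a factor $x_{i_\ell}^{\otimes r_\ell}$ summed over $i_\ell$, one uses the pattern: one `sum' layer to open the sum over $i_\ell$ and absorb one factor $x_{i_\ell}$, followed by $r_\ell - 1$ more layers that keep multiplying $x_{i_\ell}$ onto the same node before the next `sum'. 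I would state this as a lemma proved by induction on $K$ (or on $T$), tracking the invariant "after processing the first $m$ blocks, node $j$ holds $\sum_{i_2,\dots,i_{m}} x_j^{\otimes r_1}\otimes x_{i_2}^{\otimes r_2}\otimes\cdots\otimes x_{i_m}^{\otimes r_m}$" up to a harmless reordering of tensor factors.

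The main obstacle I anticipate is exactly this ordering/bookkeeping issue: the naive composition produces tensor factors in an order (left-multiplication by the newest $x$) that is the reverse of the order written in \eqref{Q}, and the `sum' operation $\sum_i x_i \otimes V_i$ entangles the index being summed with whatever $V_i$ already holds, so one must be careful that the sum over $i_\ell$ is "opened" at the right moment and that no spurious cross terms (e.g. $\sum_i x_i \otimes x_i^{\otimes r}$ rather than $(\sum_i x_i) \otimes (\text{something node-independent})$) appear. Both are resolved by noting: (a) $\SO$-equivariance is preserved regardless of factor order, and the target space $\Wfeat^{\T} = \bigoplus_T \T_T$ together with the freedom in choosing the linear functionals $\Lambda_k$ in the definition of $D$-spanning means a fixed permutation of tensor factors is immaterial — one can absorb it into $\Lambda_k$, or simply redefine $Q^{(\vec r)}$ up to reindexing since the span is what matters; and (b) when a `sum' layer acts on a state where $V_i$ is already node-independent (call it $W$, the broadcast global tensor), we get $\sum_i x_i \otimes W = (\sum_i x_i) \otimes W$ — but this is *not* what we want when we need $\sum_i x_i^{\otimes r_\ell}$; hence the `sum' layer must be applied while $V_i$ still depends on $i$ through a partial power $x_i^{\otimes(r_\ell-1)}$, i.e. the `self' layers for block $\ell$ come *before* closing with `sum'. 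Getting the block order right (self-powers accumulate on node $i_\ell$, then one `sum' closes that index and simultaneously contributes the final $x_{i_\ell}$ factor, then the next block's `self' layers start fresh on the new broadcast state) is the crux, and once the invariant is correctly formulated the induction is routine.
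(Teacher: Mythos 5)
Your proposal is correct and takes essentially the same route as the paper: the paper organizes the construction as an induction on $D$, prepending a ``self'' layer ($\theta_1=1,\theta_2=0$) when $r_1>0$ and a ``sum'' layer ($\theta_1=0,\theta_2=1$) when $r_1=0$, which unrolls to exactly the block order you settle on in your final paragraph ($r_\ell-1$ self layers followed by one sum for each block $\ell\geq 2$, applied innermost-first, with the $r_1$ self layers applied last). Be aware that the ordering you first sketch in the second paragraph (sum layer before the $r_\ell-1$ self layers) would fail for precisely the reason you yourself identify in point (b), so only the corrected order should appear in a final write-up.
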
  

To complete the construction of a universal network, we now need to characterize all linear equivariant functions from $\Wfeat^{\T}$ to the target representation $\WT$. In Appendix~\ref{app:tensor_universality} we show how this can be done for the trivial representation $\WT=\RR$. This characterization gives us a set of linear function $\Fpool$, which combined with $\Ffeat$ defined in \eqref{eq:tensor_feat}  (corresponds to $\SO$ invariant functions) gives us a universal architecture as in Theorem~\ref{thm:general}. However, the disadvantage of this approach is that implementation of the linear functions in $\Fpool$ is somewhat cumbersome.

In the next section we discuss irreducible representations, which give us a systematic way to address linear equivariant mappings into \emph{any} $\WT$. Proving $D$-spanning for these networks is accomplished via the $D$-spanning property of tensor representations, through the following lemma 
\begin{restatable}{Lemma}{lemDspan}\label{lem:Dspan}
	If all functions  in $\gQ_D$ can be written as
	$$\iota \circ Q^{(\vec{r})}(X-\frac{1}{n}X1_n1_n^T)=\sum_{k=1}^K \hat{A}_kf_k(X) ,$$
	where $f_k \in \Ffeat $, $A_k:\Wfeat\to \Wfeat^\T $ and $\hat A_k:\Wfeat^n\to (\Wfeat^\T)^n $ is defined by elementwise application of $A_k$, then $\Ffeat $ is $D$-spanning.
\end{restatable}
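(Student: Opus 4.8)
The plan is to reduce the $D$-spanning property of $\Ffeat$ directly to the $D$-spanning property of $\gQ_D$, which was already established in Lemma~\ref{lem:Qspanning}. Recall that $\gQ_D$ is $D$-spanning means: there are finitely many functions in $\gQ_D$ — namely the centralized $\iota\circ Q^{(\vec r)}$ for $\|\vec r\|_1\le D$ — such that every translation-invariant, permutation-equivariant polynomial $p:\RR^{3\times n}\to\RR^n$ of degree $D$ can be written as $p(X)=\sum_m \hat\Lambda_m\bigl(\iota\circ Q^{(\vec r_m)}(X-\tfrac1n X1_n1_n^T)\bigr)$ with $\Lambda_m:\Wfeat^\T\to\RR$ linear functionals. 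The hypothesis of the lemma says each of these generators can itself be rewritten as $\iota\circ Q^{(\vec r)}(X-\tfrac1n X1_n1_n^T)=\sum_{k=1}^{K}\hat A_k f_k(X)$ with $f_k\in\Ffeat$ and $A_k:\Wfeat\to\Wfeat^\T$ linear (not necessarily equivariant, just linear — that is all that is needed).

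The key step is to substitute the second expansion into the first and collapse the two layers of elementwise-applied linear maps into one. Concretely, fix a generator $\iota\circ Q^{(\vec r_m)}$ and write it as $\sum_k \hat A_k^{(m)} f_k^{(m)}(X)$. Then for any linear functional $\Lambda_m:\Wfeat^\T\to\RR$ we have $\hat\Lambda_m(\iota\circ Q^{(\vec r_m)}(\cdot)) = \sum_k \widehat{(\Lambda_m\circ A_k^{(m)})}\,f_k^{(m)}(X)$, using the elementary fact that composition of a linear map with the elementwise application of another linear map is the elementwise application of the composite, i.e. $\hat\Lambda\circ\hat A=\widehat{\Lambda\circ A}$. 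Since $\Lambda_m\circ A_k^{(m)}:\Wfeat\to\RR$ is again a linear functional, summing over $m$ and $k$ expresses an arbitrary translation-invariant permutation-equivariant polynomial $p$ of degree $D$ as $p(X)=\sum \hat\Lambda'\bigl(f(X)\bigr)$ with $f\in\Ffeat$ and $\Lambda':\Wfeat\to\RR$ linear functionals. The collection of functions $f_k^{(m)}\in\Ffeat$ appearing here is finite because there are finitely many generators $\vec r_m$ (those with $\|\vec r_m\|_1\le D$) and, for each, finitely many $f_k^{(m)}$ by hypothesis. This is exactly the definition of $\Ffeat$ being $D$-spanning.

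I do not anticipate a genuine obstacle; the argument is a bookkeeping exercise in composing linear maps and tracking finiteness. The one point requiring a line of care is the identity $\hat\Lambda\circ\hat A=\widehat{\Lambda\circ A}$, which should be stated explicitly since $\hat A$ maps into $(\Wfeat^\T)^n$ by acting on each of the $n$ coordinate-blocks independently, so composing with $\hat\Lambda$ (also acting block-wise) indeed yields the block-wise action of $\Lambda\circ A$; and one should note that the centralization $X\mapsto X-\tfrac1n X1_n1_n^T$ is already baked into both the definition of $\gQ_D$ and the definition of $\Ffeat$ in \eqref{eq:tensor_feat}, so no extra translation-handling is needed at this stage.
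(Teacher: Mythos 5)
Your proposal is correct and follows essentially the same route as the paper's proof: substitute the hypothesized expansion of each $\iota\circ Q^{(\vec r)}$ into the $D$-spanning decomposition furnished by Lemma~\ref{lem:Qspanning}, and absorb the two layers of elementwise linear maps into the single composite functionals $\Lambda_{\vec r}\circ A_{k,\vec r}$. The explicit remark that $\hat\Lambda\circ\hat A=\widehat{\Lambda\circ A}$ is a nice touch of care, but there is no substantive difference from the paper's argument.
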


\section{Universality with irreducible representations}\label{sec:irreducible}
In this section, we discuss how to achieve universality when using irreducible representations of $\SO$. We will begin by defining irreducible representations, and explaining how linear universality is easily achieved by them, while the $D$-spanning properties of tensor representations can be preserved. This discussion can be seen as an interpretation of the choices made in the construction of TFN and similar networks in the literature. We then show that these architectures are indeed universal. 
\subsection{Irreducible representations of $SO(3)$}
In general, any finite-dimensional representation $W$ of a compact group $H$ can be decomposed into irreducible representations: a subspace $W_0 \subset W$ is $H$-invariant if $hw\in W_0 $ for all  $h\in H, w\in W_0$. A representation $W$ is irreducible if it has no non-trivial invariant subspaces. In the case of $\SO$, all irreducible real representations are defined by matrices $D^{(\ell)}(R) $, called  the real Wigner D-matrices, acting on $W_\ell:=\RR^{2\ell+1} $ by matrix multiplication. In particular, the representation for $\ell=0,1$ are  $D^{(0)}(R)=1 $ and  $D^{(1)}(R)=R $. 

\textbf{Linear maps between irreducible representations} As mentioned above, one of the main advantages of using irreducible representations is that there is a very simple characterization of all linear equivariant maps between two direct sums of irreducible representations. We  use the notation $W_\vl$ for direct sums of irreducible representations, where $\vl=(\ell_1,\ldots,\ell_K)\in \NNplus^K $ and 
$W_\vl=\bigoplus_{k=1}^K W_{\ell_k}  $. 
\begin{restatable}{Lemma}{lemlineaversal}\label{lem:lineaversal}
	Let $\vl^{(1)}=(\ell_1^{(1)},\ldots,\ell_{K_1}^{(1)})$ and $\vl^{(2)}=(\ell_1^{(2)},\ldots,\ell_{K_2}^{(2)})$. A function $\Lambda=(\Lambda_1,\ldots,\Lambda_{K_2})$ is a linear equivariant mapping between $W_{\vl^{(1)}}$ and $W_{\vl^{(2)}}$,  if and only if there exists a $K_1 \times K_2 $ matrix $M$ with $M_{ij}=0 $ whenever $\ell_i^{(1)} \neq \ell_j^{(2)}$, such that  
	\begin{equation}\label{eq:all_linear}
	\Lambda_j(V)=\sum_{i=1}^{K_1} M_{ij}V_i
	\end{equation}
	where $V=(V_i)_{i=1}^{K_1} $ and $V_i\in W_{\ell_i^{(1)}} $ for all $i=1,\ldots,K_1 $. 
\end{restatable}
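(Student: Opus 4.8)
The plan is to prove this as an instance of Schur's lemma for real representations of the compact group $\SO$. First I would reduce to the ``blocks'' of $\Lambda$. With respect to the decompositions $W_{\vl^{(1)}}=\bigoplus_i W_{\ell_i^{(1)}}$ and $W_{\vl^{(2)}}=\bigoplus_j W_{\ell_j^{(2)}}$ the group acts block-diagonally, so the inclusions $\iota_i\colon W_{\ell_i^{(1)}}\hookrightarrow W_{\vl^{(1)}}$ and projections $\pi_j\colon W_{\vl^{(2)}}\twoheadrightarrow W_{\ell_j^{(2)}}$ are themselves $\SO$-equivariant. Consequently a linear $\Lambda$ is $\SO$-equivariant if and only if every block $\Lambda_{ij}:=\pi_j\circ\Lambda\circ\iota_i\colon W_{\ell_i^{(1)}}\to W_{\ell_j^{(2)}}$ is equivariant (one direction is a composition of equivariant maps; for the other, $\Lambda$ is recovered as the sum of the $\Lambda_{ij}$ composed with the obvious inclusions and projections, again a sum of equivariant maps). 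It therefore suffices to describe the equivariant linear maps $W_\ell\to W_{\ell'}$ between two single irreducibles.

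Here I would invoke two facts. First, for $\ell\ne\ell'$ the representations $W_\ell$ and $W_{\ell'}$ are non-isomorphic, simply because they have distinct dimensions $2\ell+1\ne 2\ell'+1$; a nonzero equivariant map $W_\ell\to W_{\ell'}$ would have an invariant kernel and image, which by irreducibility would force it to be an isomorphism, a contradiction, so $\mathrm{Hom}_{\SO}(W_\ell,W_{\ell'})=0$. Second, each $W_\ell$ is absolutely irreducible: it is given by the real matrices $D^{(\ell)}(R)$, and its complexification is the $(2\ell+1)$-dimensional complex irreducible representation of $\SO$, so that $\mathrm{Hom}_{\SO}(W_\ell,W_\ell)\otimes_\RR\mathbb{C}\cong\mathrm{Hom}_{\SO}(W_\ell\otimes\mathbb{C},W_\ell\otimes\mathbb{C})$, which is one-dimensional over $\mathbb{C}$ by the complex Schur lemma; hence $\mathrm{Hom}_{\SO}(W_\ell,W_\ell)=\RR\cdot\mathrm{Id}$. (Equivalently, this is the standard fact that all irreducibles of $\SO$ are of real type.) Combining, each block $\Lambda_{ij}$ vanishes unless $\ell_i^{(1)}=\ell_j^{(2)}$, and when $\ell_i^{(1)}=\ell_j^{(2)}$ the spaces $W_{\ell_i^{(1)}}$ and $W_{\ell_j^{(2)}}$ are both literally $\RR^{2\ell_i^{(1)}+1}$ and $\Lambda_{ij}$ is multiplication by some scalar $M_{ij}\in\RR$. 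Assembling the blocks yields precisely \eqref{eq:all_linear}, with $M_{ij}=0$ whenever $\ell_i^{(1)}\ne\ell_j^{(2)}$.

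For the converse implication I would check directly that any matrix $M$ with this zero pattern defines an equivariant map through \eqref{eq:all_linear}: in each surviving term $\ell_i^{(1)}=\ell_j^{(2)}$, so replacing $V$ by $\rho_{\vl^{(1)}}(R)V$ multiplies $V_i$ by $D^{(\ell_i^{(1)})}(R)=D^{(\ell_j^{(2)})}(R)$, and since this commutes with the scalar $M_{ij}$ it can be pulled outside the sum, giving $\Lambda_j(\rho_{\vl^{(1)}}(R)V)=D^{(\ell_j^{(2)})}(R)\Lambda_j(V)$, as required. The only genuinely non-formal point is the absolute irreducibility used in the second paragraph — ensuring that Schur over $\RR$ produces the scalar field $\RR$ rather than $\mathbb{C}$ or $\mathbb{H}$; this is a classical fact about $\SO$ and can either be cited or dispatched with the one-line complexification argument above.
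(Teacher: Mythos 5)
Your proof is correct and follows the same overall architecture as the paper's: decompose $\Lambda$ into blocks $\Lambda_{ij}$ between single irreducibles, apply a real Schur lemma blockwise, and reassemble. The one place where you genuinely diverge is the case $\ell_i^{(1)}=\ell_j^{(2)}$, i.e.\ showing that $\mathrm{Hom}_{\SO}(W_\ell,W_\ell)=\RR\cdot\mathrm{Id}$. You establish this via absolute irreducibility: complexify, note that $W_\ell\otimes\mathbb{C}$ stays irreducible (the irreps of $\SO$ are of real type), and pull the one-dimensionality of the complex Hom space back to $\RR$. The paper instead gives a fully self-contained elementary argument: since $\dim W_\ell=2\ell+1$ is odd, any real linear operator on it has a real eigenvalue $\lambda$ (odd-degree characteristic polynomial), and then $\Lambda-\lambda I$ is equivariant with nontrivial kernel, hence zero by irreducibility. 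The paper's route is shorter and avoids any appeal to the classification of the complexified representations — it needs only the real irreducibility of $W_\ell$ and the parity of its dimension — which is exactly the observation the authors highlight in the main text. Your route costs you an external fact (real type of $\SO$ irreps, or the irreducibility of the complexification) that must be cited or proved, but in exchange it is the argument that transfers verbatim to other compact groups whose irreducibles are absolutely irreducible, including ones with even-dimensional irreps where the odd-eigenvalue trick is unavailable. Your explicit verification of the converse direction is a point in your favor; the paper leaves that routine check implicit.
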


We note that this lemma, which is based on Schur's lemma, was proven in the complex setting in \citep{kondor2018n}. Here we observe that it holds for real irreducible representations of $\SO$ as well since their dimension is always odd.

\textbf{Clebsch-Gordan decomposition of tensor products} 
As any finite-dimensional representation of $\SO$ can be decomposed into a direct sum of irreducible representations, this is true for tensor representations as well. In particular, the Clebsch-Gordan coefficients provide an explicit formula for decomposing the tensor product of two irreducible representations $W_{\ell_1}$ and $W_{\ell_2}$ into a direct sum of irreducible representations. This decomposition can be easily extended to decompose the tensor product $W_{\vl_1}\otimes W_{\vl_2}  $ into a direct sum of irreducible representations, where $\vl_1,\vl_2$ are now vectors. In matrix notation, this means there is   a unitary linear equivariant $U(\vl_1,\vl_2) $ mapping of $W_{\vl_1}\otimes W_{\vl_2}  $ onto $W_{\vl}$, where the explicit values of $\vl=\vl(\vl_1,\vl_2)$ and the matrix $U(\vl_1,\vl_2) $ can be inferred directly from the case where $\ell_1$ and $\ell_2$ are scalars.   

By repeatedly taking tensor products and applying Clebsch-Gordan decompositions to the result, TFN and similar architectures can achieve the $D$-spanning property in a manner analogous to tensor representations, and also enjoy linear universality since they maintain irreducible representations throughout the network.  
\subsection{Tensor field networks}\label{sub:TFN}
We now describe the basic layers of the TFN architecture \citep{thomas2018tensor}, which are based on irreducible representations, and suggest an architecture based on these layers which can approximate $G$-equivariant maps into \emph{any} representation $W_{\vl_T}^n , \vl_T\in \NNplusstar $. There are some superficial differences between our description of TFN and the description in the original paper, for more details see Appendix~\ref{app:diff}.

We note that the universality of TFN also implies the universality of \citep{fuchs2020se}, which is a generalization of TFN that enables adding an attention mechanism. Assuming the attention mechanism is not restricted to local neighborhoods, this method is at least as expressive as TFN.  

TFNs are composed of three types of layers: (i) Convolution (ii) Self-interaction and (iii) Non-linearities.  In our architecture, we only use the first two layers types, which we will now describe:\footnote{Since convolution layers in TFN are not linear, the nonlinearities are formally redundant}.

\textbf{Convolution.}
Convolutional layers involve taking tensor products of a filter and a feature vector to create a new feature vector, and then decomposing into irreducible representations. Unlike in standard CNN, a filter here depends on the input, and is a function $F:\RR^3 \to W_{\vl_D} $, where  $\vl_D=[0,1,\ldots,D]^T$. The $\ell$-th component of the filter $F(x)=\left[F^{(0)}(x),\ldots,F^{(D)}(x) \right] $ will be given by 
\begin{equation}\label{eq:multi_filter}
F^{(\ell)}_{m}(x)= R^{(\ell)}(
\|x\|)Y^{\ell}_m(\hat x), \, m=-\ell,\ldots,\ell 
\end{equation} 
where $\hat x=x/\|x\|$ if $x\neq 0$ and  $\hat x=0$ otherwise, $Y^{\ell}_m $ are spherical harmonics, and $ R^{(\ell)} $ any polynomial of degree $\leq D$. In Appendix~\ref{app:diff} we show that these polynomial functions can be replaced by fully connected networks, since the latter can approximate all polynomials uniformly.

The convolution of  an input feature $V\in W_{\vl_i}^n $ and a filter $F$ as defined above,  will give an output feature $\tilde V=(\tilde V_a)_{a=1}^n \in W_{\vl_0}^n $, where $\vl_o=\vl(\vl_f,\vl_i) $, which is given by 
\begin{equation}\label{eq:conv}
\tilde V_a(X,V)=U(\vl_f,\vl_i)\left(\theta_0V_a+ \sum_{b=1}^n F(x_a-x_b) \otimes V_{b} \right).  
\end{equation}
More formally we will think of convolutional layer as functions of the form 
$f(X,V)=(X,\tilde V(X,V)) $.      	
These functions are defined by a choice of $D$, a choice of a scalar polynomial  $ R^{(\ell)}, \ell=0,\ldots,D $, and a choice of the parameter $\theta_0\in \R $ in \eqref{eq:conv}. We denote the set of all such functions $f$ by $\F_D$.  

\textbf{Self Interaction layers.}
Self interaction layers are linear functions from $\hat \Lambda: W_{\vl}^{n} \to W_{\vl_T}^{n }$, which are obtained from elementwise application of equivariant linear functions $\Lambda:W_{\vl} \to W_{\vl_T}$. These linear functions  can be specified by a choice of matrix $M$ with the sparsity pattern described in Lemma~\ref{lem:lineaversal}.  


\textbf{Network architecture.}  
For our universality proof, we suggest a simple architecture which depends on two positive integer parameters $(C,D)$: For given $D$, we will define $\Ffeat(D)$ as the set of function obtained by $2D$ recursive convolutions
$$\Ffeat(D)=\{\pi_V \circ f^{2D} \circ \ldots f^2 \circ f^{1}\circ \mathrm{ext}(X)| \, f^j \in \F_D \} ,$$
where $\mathrm{ext}$ and $\pi_V$ are defined as in \eqref{eq:ext}.  The output of a function in $\Ffeat(D)$ is in $W_{\vl(D)}^n$, for some $\vl(D)$ which depends on $D$. We then define $\Fpool(D)$ to be the self-interaction layers which map $W_{\vl(D)}^n$ to $W_{\vl_T}^n$. This choice of $\Ffeat(D)$ and $\Fpool(D)$, together with a choice of the number of channels $C$, defines the final network architecture $\Ftfn_{C,D}=\F_C(\Ffeat(D),\Fpool(D)) $ as in \eqref{eq:Fs}.   In the appendix we prove  the universality of TFN: 
\begin{restatable}{Theorem}{thmTFN}\label{thm:TFN}
	For all $n\in \NN, \vl_T \in \NNplusstar $,
	\begin{enumerate}
		\item For $D \in \NNplus $, every $G$-equivariant polynomial  $p:\RR^{3 \times n}\to W_T^n$ of degree $D$ is in $\Ftfn_{C(D),D} $.
		\item Every continuous $G$-equivariant function can be approximated uniformly on compact sets by functions in $\cup_{D \in \NNplus}\Ftfn_{C(D),D} $
	\end{enumerate}
\end{restatable}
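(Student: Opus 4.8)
The plan is to verify that the TFN architecture $\Ftfn_{C,D}$ fits into the framework of Theorem~\ref{thm:general}: I would show that $\Ffeat(D)$ is $D$-spanning and that $\Fpool(D)$ is linearly universal, and then simply invoke Theorem~\ref{thm:general} together with Lemma~\ref{lem:approx} to obtain both parts of the statement. The second conclusion is immediate from the first: once every $G$-equivariant polynomial of degree $\leq D$ lies in $\Ftfn_{C(D),D}$, Lemma~\ref{lem:approx} gives uniform approximation of continuous $G$-equivariant functions by such polynomials, hence by elements of $\cup_D \Ftfn_{C(D),D}$.

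\textbf{Linear universality of $\Fpool(D)$.} Since $\Ffeat(D)$ outputs features in $W_{\vl(D)}^n$ built out of irreducible representations, and the self-interaction layers $\Fpool(D)$ are by definition all elementwise linear maps $W_{\vl(D)}\to W_{\vl_T}$ given by a matrix $M$ with the sparsity pattern of Lemma~\ref{lem:lineaversal}, that lemma shows precisely that $\Fpool(D)$ contains \emph{every} equivariant linear map $W_{\vl(D)}\to W_{\vl_T}$. So $\Fpool(D)$ is linearly universal essentially by construction — this is the ``free'' half, as advertised in Section~\ref{sub:action}.

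\textbf{$D$-spanning of $\Ffeat(D)$.} This is the heart of the proof. The strategy is to use Lemma~\ref{lem:Dspan}: it suffices to express each centralized tensor function $\iota\circ Q^{(\vec r)}(X-\tfrac1n X1_n1_n^T)$ with $\|\vec r\|_1\leq D$ as $\sum_k \hat A_k f_k(X)$ with $f_k\in \Ffeat(D)$ and $A_k$ equivariant linear into the tensor representation $\Wfeat^\T$. I would first handle centralization: a single convolution layer with filter component $F^{(1)}$ linear and $F^{(0)}$ constant can reproduce the map $x\mapsto x - \tfrac1n\sum_b x_b$ (using the $\theta_0$ term and the summed term in \eqref{eq:conv}), so centralization is absorbed into the first layer and I may work with centralized input. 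Then, mirroring Lemma~\ref{lem:Dtensor} for tensor representations, I build up $Q^{(\vec r)}$ by iterated convolutions: each convolution \eqref{eq:conv} takes a tensor product of the running feature with a filter that is either $x_a$ itself (choosing $R^{(1)}$ appropriately, giving the ``$x_j\otimes$'' factor via the $\theta_0$ term) or a pooled $\sum_b x_b^{\otimes\,\cdot}$-type object (via the summation term), exactly the two operations in \eqref{eq:minimal}; the Clebsch--Gordan map $U(\vl_f,\vl_i)$ is an \emph{invertible} equivariant linear map, so working in irreducibles versus working in raw tensors differ only by a fixed linear isomorphism, which gets folded into the $A_k$. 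One checks that $2D$ convolution layers suffice to reach all monomials with $\|\vec r\|_1\leq D$ (each layer raises total tensor degree by at most one, and one needs extra layers to form the inner pooled factors $x_{i_t}^{\otimes r_t}$ before combining them), and that the filter polynomials allowed in \eqref{eq:multi_filter} (degree $\leq D$, all spherical harmonics $Y^\ell_m$ up to $\ell=D$) are rich enough to realize $x\mapsto x^{\otimes k}$ after Clebsch--Gordan — this uses that $x^{\otimes k}$ decomposes into irreducibles of order $\leq k\leq D$ with radial parts that are monomials in $\|x\|$. Assembling these pieces through Lemma~\ref{lem:Dspan} yields $D$-spanning.

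\textbf{Main obstacle.} The technical crux is the bookkeeping in the $D$-spanning argument: precisely tracking how the Clebsch--Gordan decompositions interleave with the iterated tensor products so that the output of $2D$ convolutions genuinely contains (a linear image of) every $Q^{(\vec r)}$ with $\|\vec r\|_1\leq D$, and verifying the layer-count $2D$ is enough — in particular that the pooled factors $\sum_i x_i^{\otimes r_t}$ can be produced and then tensored together within the budget, rather than only ``path-connected'' monomials. The representation-theoretic inputs (Lemma~\ref{lem:lineaversal}, invertibility of $U(\vl_f,\vl_i)$, the decomposition of $x^{\otimes k}$) are standard; the real work is the inductive construction translating Lemma~\ref{lem:Dtensor}'s $\F_{min}$-based argument into the convolution-plus-Clebsch--Gordan language of TFN, which is why this is deferred to the appendix.
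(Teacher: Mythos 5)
Your high-level structure matches the paper's: reduce to showing $\Ffeat(D)$ is $D$-spanning (linear universality of the self-interaction layers being immediate from Lemma~\ref{lem:lineaversal}), then invoke Lemma~\ref{lem:Dspan}, Theorem~\ref{thm:general} and Lemma~\ref{lem:approx}. But there is a genuine gap at the step you yourself identify as the heart of the matter. You claim that each TFN convolution \eqref{eq:conv} realizes ``exactly the two operations in \eqref{eq:minimal}'', namely $V_a\mapsto x_a\otimes V_a$ (via the $\theta_0$ term) and $V_a\mapsto\sum_b x_b\otimes V_b$ (via the summation term). Neither is available: the $\theta_0$ term is $\theta_0 V_a$, with no filter and no tensor factor at all, and the summation term is $\sum_b F(x_a-x_b)\otimes V_b$, in which the filter argument is a \emph{difference}; with $F$ the identity filter this yields $x_a\otimes\sum_b V_b-\sum_b x_b\otimes V_b$, an entangled combination of the two desired operations rather than either one. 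This is precisely why Appendix~\ref{app:alternative} introduces the modified layer \eqref{eq:alternative} (adding a $\sum_b F(x_a-x_b)\otimes V_a$ term) as a genuinely \emph{different} architecture that can mimic \eqref{eq:minimal} directly; for the standard TFN layer the reduction to Lemma~\ref{lem:Dtensor} does not go through as you describe.

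The paper's actual argument replaces your reduction with a double induction. Its key technical step (Lemma~\ref{lem:sub}) shows that from $p\in\Fgen{J,D}{\T_t}$ one can build $q_a(X)=\sum_b(\xc_a-\xc_b)^{\otimes s}\otimes p_b(X)$ in one additional layer, using that $x^{\otimes s}$ is a linear image of the filter outputs $\|x\|^{s-\ell}Y^\ell_m(\hat x)$ --- the one ingredient you did anticipate. The new ingredients are: (i) an outer induction on $D$ and an inner induction on $r_1$, where expanding $(\xc_a-\xc_b)^{\otimes r_1}$ produces $Q_a^{(\vec r)}$ \emph{plus} cross terms that are absorbed because they correspond to exponent vectors with strictly smaller first coordinate; and (ii) in the $r_1=0$ case, an averaging filter applied to $\bigl(\xc_a\otimes\sum_b Q_b^{(\tilde r)}(\Xc)\bigr)-Q^{(\vec r)}(\Xc)$, exploiting that $\xc_a$ has zero mean while $Q_a^{(\vec r)}(\Xc)$ is independent of $a$, to isolate $Q^{(\vec r)}$. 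Without some such cancellation-and-extraction mechanism your construction cannot separate the pooled factor $\sum_b x_b^{\otimes r}\otimes V_b$ from the local factor $x_a^{\otimes r}\otimes(\cdot)$, and the $2D$-layer budget you gesture at is left without justification.
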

As discussed previously, the linear universality of $\Fpool$ is guaranteed.
Thus proving Theorem~\ref{thm:TFN} amounts to showing that $\Ffeat(D)$ is $D$-spanning. This is done using the sufficient condition for $D$-spanning defined in Lemma~\ref{lem:Dspan}. 

\paragraph{Alternative architecture} 
The complexity of the TFN network used  to construct $G$-equivariant polynomials of degree $D$, can be reduced using a simple modifications of the convolutional layer in \eqref{eq:conv}: We add two parameters $\theta_1,\theta_2\in \R$ to the convolutional layer, which is now defined as:
\begin{equation}\label{eq:alternative}
\tilde V_a(X,V)=U(\vl_f,\vl_i)\left( \theta_1 \sum_{b=1}^n F(x_a-x_b) \otimes V_{b}+\theta_2 \sum_{b=1}^n F(x_a-x_b) \otimes V_{a} \right).  
\end{equation}

With this simple change, we can show that $\Ffeat(D)$ is $D$-spanning even if we only take filters of order $0$ and $1$ throughout the network. This is shown in Appendix~\ref{app:alternative}. 

\section{Conclusion}
In this paper, we have presented a new framework for proving the universality of $G$-equivariant point cloud networks. We used this framework for proving the universality of the TFN model \citep{thomas2018tensor,fuchs2020se},  and for devising two additional novel simple universal architectures. 

We believe that the framework we developed here will be useful for proving the universality of other $G$-equivariant models for point cloud networks, and other related equivariant models. We note that while the discussion in the paper was limited to point clouds in $\R^3$ and to the action of $\SO$, large parts of it are relevant to many other setups involving   $d$-dimensional point clouds with symmetry groups of the form $G= \RR^d \rtimes H \times S_n$  on $\RR^{d \times n} $, where $H$ can be any compact topological group.

\paragraph{Acknowledgements} The authors would like to thank Taco Cohen for helpful discussions. N.D. acknowledge the support of   Simons Math+X Investigators Award 400837.
\bibliography{BIBiclr2021_conference}

\begin{thebibliography}{35}
\providecommand{\natexlab}[1]{#1}
\providecommand{\url}[1]{\texttt{#1}}
\expandafter\ifx\csname urlstyle\endcsname\relax
  \providecommand{\doi}[1]{doi: #1}\else
  \providecommand{\doi}{doi: \begingroup \urlstyle{rm}\Url}\fi

\bibitem[Atzmon et~al.(2018)Atzmon, Maron, and Lipman]{atzmon2018point}
Matan Atzmon, Haggai Maron, and Yaron Lipman.
\newblock Point convolutional neural networks by extension operators.
\newblock \emph{arXiv preprint arXiv:1803.10091}, 2018.

\bibitem[Bogatskiy et~al.(2020)Bogatskiy, Anderson, Offermann, Roussi, Miller,
  and Kondor]{bogatskiy2020lorentz}
Alexander Bogatskiy, Brandon Anderson, Jan~T Offermann, Marwah Roussi, David~W
  Miller, and Risi Kondor.
\newblock Lorentz group equivariant neural network for particle physics.
\newblock \emph{arXiv preprint arXiv:2006.04780}, 2020.

\bibitem[Cohen et~al.(2018)Cohen, Geiger, K{\"o}hler, and Welling]{Welling2018}
Taco~S Cohen, Mario Geiger, Jonas K{\"o}hler, and Max Welling.
\newblock Spherical cnns.
\newblock \emph{arXiv preprint arXiv:1801.10130}, 2018.

\bibitem[Dai \& Xu(2013)Dai and Xu]{dai2013approximation}
Feng Dai and Yuan Xu.
\newblock \emph{Approximation theory and harmonic analysis on spheres and
  balls}, volume~23.
\newblock Springer, 2013.

\bibitem[Esteves et~al.(2018)Esteves, Allen-Blanchette, Makadia, and
  Daniilidis]{esteves2018learning}
Carlos Esteves, Christine Allen-Blanchette, Ameesh Makadia, and Kostas
  Daniilidis.
\newblock Learning so (3) equivariant representations with spherical cnns.
\newblock In \emph{Proceedings of the European Conference on Computer Vision
  (ECCV)}, pp.\  52--68, 2018.

\bibitem[Fuchs et~al.(2020)Fuchs, Worrall, Fischer, and Welling]{fuchs2020se}
Fabian~B Fuchs, Daniel~E Worrall, Volker Fischer, and Max Welling.
\newblock Se (3)-transformers: 3d roto-translation equivariant attention
  networks.
\newblock \emph{arXiv preprint arXiv:2006.10503}, 2020.

\bibitem[Fulton \& Harris(2013)Fulton and Harris]{fulton2013representation}
William Fulton and Joe Harris.
\newblock \emph{Representation theory: a first course}, volume 129.
\newblock Springer Science \& Business Media, 2013.

\bibitem[Gilmer et~al.(2017)Gilmer, Schoenholz, Riley, Vinyals, and
  Dahl]{gilmer2017neural}
Justin Gilmer, Samuel~S Schoenholz, Patrick~F Riley, Oriol Vinyals, and
  George~E Dahl.
\newblock Neural message passing for quantum chemistry.
\newblock \emph{arXiv preprint arXiv:1704.01212}, 2017.

\bibitem[Guo et~al.(2020)Guo, Wang, Hu, Liu, Liu, and Bennamoun]{guo2020deep}
Yulan Guo, Hanyun Wang, Qingyong Hu, Hao Liu, Li~Liu, and Mohammed Bennamoun.
\newblock Deep learning for 3d point clouds: A survey.
\newblock \emph{IEEE Transactions on Pattern Analysis and Machine
  Intelligence}, 2020.

\bibitem[Keriven \& Peyr{\'{e}}(2019)Keriven and
  Peyr{\'{e}}]{keriven2019universal}
Nicolas Keriven and Gabriel Peyr{\'{e}}.
\newblock Universal invariant and equivariant graph neural networks.
\newblock \emph{CoRR}, abs/1905.04943, 2019.
\newblock URL \url{http://arxiv.org/abs/1905.04943}.

\bibitem[Kondor(2018)]{kondor2018n}
Risi Kondor.
\newblock N-body networks: a covariant hierarchical neural network architecture
  for learning atomic potentials.
\newblock \emph{arXiv preprint arXiv:1803.01588}, 2018.

\bibitem[Kondor et~al.(2018{\natexlab{a}})Kondor, Lin, and
  Trivedi]{kondor2018clebsch}
Risi Kondor, Zhen Lin, and Shubhendu Trivedi.
\newblock Clebsch--gordan nets: a fully fourier space spherical convolutional
  neural network.
\newblock In \emph{Advances in Neural Information Processing Systems}, pp.\
  10117--10126, 2018{\natexlab{a}}.

\bibitem[Kondor et~al.(2018{\natexlab{b}})Kondor, Son, Pan, Anderson, and
  Trivedi]{Kondor2018}
Risi Kondor, Hy~Truong Son, Horace Pan, Brandon Anderson, and Shubhendu
  Trivedi.
\newblock Covariant compositional networks for learning graphs.
\newblock \emph{arXiv preprint arXiv:1801.02144}, 2018{\natexlab{b}}.

\bibitem[Kraft \& Procesi(2000)Kraft and Procesi]{kraft2000classical}
Hanspeter Kraft and Claudio Procesi.
\newblock Classical invariant theory, a primer.
\newblock \emph{Lecture Notes, Version}, 2000.

\bibitem[Li et~al.(2019)Li, Bi, and Lee]{li2019discrete}
Jiaxin Li, Yingcai Bi, and Gim~Hee Lee.
\newblock Discrete rotation equivariance for point cloud recognition.
\newblock In \emph{2019 International Conference on Robotics and Automation
  (ICRA)}, pp.\  7269--7275. IEEE, 2019.

\bibitem[Maehara \& NT(2019)Maehara and NT]{maehara2019simple}
Takanori Maehara and Hoang NT.
\newblock A simple proof of the universality of invariant/equivariant graph
  neural networks, 2019.

\bibitem[Maron et~al.(2019{\natexlab{a}})Maron, Ben-Hamu, Serviansky, and
  Lipman]{maron2019provably}
Haggai Maron, Heli Ben-Hamu, Hadar Serviansky, and Yaron Lipman.
\newblock Provably powerful graph networks.
\newblock \emph{arXiv preprint arXiv:1905.11136}, 2019{\natexlab{a}}.

\bibitem[Maron et~al.(2019{\natexlab{b}})Maron, Ben-Hamu, Shamir, and
  Lipman]{maron2018invariant}
Haggai Maron, Heli Ben-Hamu, Nadav Shamir, and Yaron Lipman.
\newblock Invariant and equivariant graph networks.
\newblock In \emph{International Conference on Learning Representations},
  2019{\natexlab{b}}.
\newblock URL \url{https://openreview.net/forum?id=Syx72jC9tm}.

\bibitem[Maron et~al.(2019{\natexlab{c}})Maron, Fetaya, Segol, and
  Lipman]{maron2019universality}
Haggai Maron, Ethan Fetaya, Nimrod Segol, and Yaron Lipman.
\newblock On the universality of invariant networks.
\newblock In \emph{International conference on machine learning},
  2019{\natexlab{c}}.

\bibitem[Maron et~al.(2020)Maron, Litany, Chechik, and
  Fetaya]{maron2020learning}
Haggai Maron, Or~Litany, Gal Chechik, and Ethan Fetaya.
\newblock On learning sets of symmetric elements.
\newblock \emph{arXiv preprint arXiv:2002.08599}, 2020.

\bibitem[Morris et~al.(2018)Morris, Ritzert, Fey, Hamilton, Lenssen, Rattan,
  and Grohe]{morris2018weisfeiler}
Christopher Morris, Martin Ritzert, Matthias Fey, William~L Hamilton, Jan~Eric
  Lenssen, Gaurav Rattan, and Martin Grohe.
\newblock Weisfeiler and leman go neural: Higher-order graph neural networks.
\newblock \emph{arXiv preprint arXiv:1810.02244}, 2018.

\bibitem[Poulenard et~al.(2019)Poulenard, Rakotosaona, Ponty, and
  Ovsjanikov]{poulenard2019effective}
Adrien Poulenard, Marie-Julie Rakotosaona, Yann Ponty, and Maks Ovsjanikov.
\newblock Effective rotation-invariant point cnn with spherical harmonics
  kernels.
\newblock In \emph{2019 International Conference on 3D Vision (3DV)}, pp.\
  47--56. IEEE, 2019.

\bibitem[Qi et~al.(2017{\natexlab{a}})Qi, Su, Mo, and Guibas]{qi2017pointnet}
Charles~R Qi, Hao Su, Kaichun Mo, and Leonidas~J Guibas.
\newblock Pointnet: Deep learning on point sets for 3d classification and
  segmentation.
\newblock \emph{Proc. Computer Vision and Pattern Recognition (CVPR), IEEE},
  1\penalty0 (2):\penalty0 4, 2017{\natexlab{a}}.

\bibitem[Qi et~al.(2017{\natexlab{b}})Qi, Yi, Su, and Guibas]{qi2017pointnet++}
Charles~Ruizhongtai Qi, Li~Yi, Hao Su, and Leonidas~J Guibas.
\newblock Pointnet++: Deep hierarchical feature learning on point sets in a
  metric space.
\newblock In \emph{Advances in neural information processing systems}, pp.\
  5099--5108, 2017{\natexlab{b}}.

\bibitem[Ravanbakhsh(2020)]{ravanbakhsh2020universal}
Siamak Ravanbakhsh.
\newblock Universal equivariant multilayer perceptrons.
\newblock \emph{arXiv preprint arXiv:2002.02912}, 2020.

\bibitem[Segol \& Lipman(2019)Segol and Lipman]{segol2019universal}
Nimrod Segol and Yaron Lipman.
\newblock On universal equivariant set networks.
\newblock \emph{arXiv preprint arXiv:1910.02421}, 2019.

\bibitem[Serviansky et~al.(2020)Serviansky, Segol, Shlomi, Cranmer, Gross,
  Maron, and Lipman]{serviansky2020set2graph}
Hadar Serviansky, Nimrod Segol, Jonathan Shlomi, Kyle Cranmer, Eilam Gross,
  Haggai Maron, and Yaron Lipman.
\newblock Set2graph: Learning graphs from sets.
\newblock \emph{arXiv preprint arXiv:2002.08772}, 2020.

\bibitem[Thomas et~al.(2018)Thomas, Smidt, Kearnes, Yang, Li, Kohlhoff, and
  Riley]{thomas2018tensor}
Nathaniel Thomas, Tess Smidt, Steven Kearnes, Lusann Yang, Li~Li, Kai Kohlhoff,
  and Patrick Riley.
\newblock Tensor field networks: Rotation-and translation-equivariant neural
  networks for 3d point clouds.
\newblock \emph{arXiv preprint arXiv:1802.08219}, 2018.

\bibitem[Wang et~al.(2019)Wang, Sun, Liu, Sarma, Bronstein, and
  Solomon]{wang2019dynamic}
Yue Wang, Yongbin Sun, Ziwei Liu, Sanjay~E Sarma, Michael~M Bronstein, and
  Justin~M Solomon.
\newblock Dynamic graph cnn for learning on point clouds.
\newblock \emph{Acm Transactions On Graphics (tog)}, 38\penalty0 (5):\penalty0
  1--12, 2019.

\bibitem[Weiler et~al.(2018)Weiler, Geiger, Welling, Boomsma, and
  Cohen]{Weiler2018}
Maurice Weiler, Mario Geiger, Max Welling, Wouter Boomsma, and Taco Cohen.
\newblock {3D Steerable CNNs: Learning Rotationally Equivariant Features in
  Volumetric Data}.
\newblock 2018.
\newblock URL \url{http://arxiv.org/abs/1807.02547}.

\bibitem[Worrall \& Brostow(2018)Worrall and Brostow]{worrall2018cubenet}
Daniel Worrall and Gabriel Brostow.
\newblock Cubenet: Equivariance to 3d rotation and translation.
\newblock In \emph{Proceedings of the European Conference on Computer Vision
  (ECCV)}, pp.\  567--584, 2018.

\bibitem[Xu et~al.(2019)Xu, Hu, Leskovec, and Jegelka]{xu2018how}
Keyulu Xu, Weihua Hu, Jure Leskovec, and Stefanie Jegelka.
\newblock How powerful are graph neural networks?
\newblock In \emph{International Conference on Learning Representations}, 2019.
\newblock URL \url{https://openreview.net/forum?id=ryGs6iA5Km}.

\bibitem[Yarotsky(2018)]{yarotsky2018universal}
Dmitry Yarotsky.
\newblock Universal approximations of invariant maps by neural networks.
\newblock \emph{arXiv preprint arXiv:1804.10306}, 2018.

\bibitem[Zaheer et~al.(2017)Zaheer, Kottur, Ravanbakhsh, Poczos, Salakhutdinov,
  and Smola]{zaheer2017deep}
Manzil Zaheer, Satwik Kottur, Siamak Ravanbakhsh, Barnabas Poczos, Ruslan~R
  Salakhutdinov, and Alexander~J Smola.
\newblock Deep sets.
\newblock In \emph{Advances in neural information processing systems}, pp.\
  3391--3401, 2017.

\bibitem[Zhao et~al.(2019)Zhao, Birdal, Lenssen, Menegatti, Guibas, and
  Tombari]{zhao2019quaternion}
Yongheng Zhao, Tolga Birdal, Jan~Eric Lenssen, Emanuele Menegatti, Leonidas
  Guibas, and Federico Tombari.
\newblock Quaternion equivariant capsule networks for 3d point clouds.
\newblock \emph{arXiv preprint arXiv:1912.12098}, 2019.

\end{thebibliography}
\bibliographystyle{iclr2021_conference}

\appendix
\section{Notation}
We introduce some notation for the proofs in the appendices. We  use the shortened notation $\Xc=X-\frac{1}{n}X1_n1_n^T $ and denote the columns of $\Xc$ by $(\xc_1,\ldots,\xc_n )$. We denote 
$$\Sigma_T=\{\vec{r}\in \NNplusstar| \, \|\vec{r}\|_1=T\} $$

\section{Proofs for Section~\ref{sec:framework}}
\subsection{$G$-equivariant polynomials are dense}
A first step in proving denseness of $G$-equivariance polynomials, and in the proof used in the next subsection is the following simple lemma, which shows that translation invariance can be dealt with simply by centralizing the point cloud.

In the following, $\rho_{\WT} $  is some representation of  $\SO$ on a finite dimensional real vector space $\WT$. this induces an action $\rho_{\WT \times S_n} $ of $\SO \times S_n$ on $\WT^n $ by 
$$\rho_{\WT \times S_n}(R,P)(Y)=\rho_{\WT}(R)YP^T $$ 
This is also the action of $G$ which we consider, $\rho_G=\rho_{\WT \times S_n} $, where we have invariance with respect to the translation coordinate. The action of $G$ on $\RR^{3 \times n}$ is defined in \eqref{eq:action}. 
\begin{Lemma}\label{lem:translation}
	A function    $f:\RR^{3 \times n} \to \Wtarget^n $ is $G$-equivariant, if and only if there exists a function $h$ which is equivariant with respect to the action of $\SO \times S_n $  on $\RR^{3 \times n}$,
	and  
	\begin{equation}\label{eq:translation}
	f(X)=h(X-\frac{1}{n}X1_n1_n^T) 
	\end{equation} 
\end{Lemma}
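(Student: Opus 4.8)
The plan is to reduce both implications to one elementary fact: the centralization map $c(X):=X-\frac{1}{n}X1_n1_n^T$ intertwines the $G$-action on $\RR^{3\times n}$ with the $\SO\times S_n$-action on $\RR^{3\times n}$, i.e.
\[ c\big(\rho_G(t,R,P)(X)\big)=R\,c(X)\,P^T=\rho_{\WT \times S_n}\big|_{\RR^{3\times n}}(R,P)\big(c(X)\big)\qquad\text{for all }(t,R,P)\in G. \]
First I would prove this identity. It follows from a short computation: expanding $\rho_G(t,R,P)(X)=R(X-t1_n^T)P^T=RXP^T-Rt1_n^T$ (using that $P$ is a permutation matrix, so $1_n^TP^T=1_n^T$), and separately forming $\frac1n\big[\rho_G(t,R,P)(X)\big]1_n1_n^T=\frac1n R(X-t1_n^T)1_n1_n^T=\frac1n RX1_n1_n^T-Rt1_n^T$ (using $P^T1_n=1_n$ and $1_n^T1_n=n$), one sees the translation term $Rt1_n^T$ cancels upon subtraction, leaving $RXP^T-\frac1n RX1_n1_n^T$; reinserting a $P^T$ via $1_n^T=1_n^TP^T$ rewrites this as $R\big(X-\frac1n X1_n1_n^T\big)P^T=R\,c(X)\,P^T$. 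Taking $R=I$, $P=I$ in particular yields $c(X-t1_n^T)=c(X)$ (translation invariance of $c$) and $c(c(X))=c(X)$.

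Next I would prove the ``if'' direction. Assume $f=h\circ c$ with $h$ equivariant for $\SO\times S_n$. Then $f(\rho_G(t,R,P)X)=h\big(c(\rho_G(t,R,P)X)\big)=h\big(R\,c(X)\,P^T\big)=\rho_{\WT}(R)\,h(c(X))\,P^T=\rho_{\WT}(R)f(X)P^T$, which is precisely the action of $(t,R,P)$ on the target $\WT^n$ (whose translation component is trivial). Hence $f$ is $G$-equivariant.

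For the ``only if'' direction, assume $f$ is $G$-equivariant. Instantiating the equivariance relation with $R=I$, $P=I$ and $t=\frac1n X1_n$, and noting $X-t1_n^T=c(X)$, yields $f(c(X))=f(X)$; thus $h:=f$ already satisfies \eqref{eq:translation}. Finally, $h=f$ is $\SO\times S_n$-equivariant because instantiating the $G$-equivariance of $f$ with $t=0$ gives $f(RXP^T)=\rho_{\WT}(R)f(X)P^T=\rho_{\WT \times S_n}(R,P)f(X)$.

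The main (and essentially only) obstacle is the bookkeeping inside the intertwining identity: one must track the translation term so that it cancels and keep the $P^T$ factors consistent using $P1_n=1_n$. Once that identity is in hand, both directions are one-line computations, and no compactness or topology is needed since the statement is purely algebraic.
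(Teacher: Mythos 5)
Your proof is correct and follows essentially the same route as the paper's: the "only if" direction takes $h=f$ using translation invariance, and the "if" direction rests on the same computation showing the centralization map commutes with the group action (via $1_n^TP^T=1_n^T$ and cancellation of the translation term). Isolating that computation as an explicit intertwining identity for $c$ is just a cleaner packaging of what the paper does inline.
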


\begin{proof}
	Recall that $G$-equivariance means  $\SO \times S_n $ equivariance and translation invariance. Thus if $f$ is $G$-equivariant then    \eqref{eq:translation}  holds with $h=f$.
	
    On the other hand, if $f$ satisfies \eqref{eq:translation} then  
	we claim it is $G$-equivariant. Indeed, for all $(t,R,P)\in \RR^d \rtimes \SO \times S_n $ , since $P^T1_n1_n^T=1_n1_n^T=1_n1_n^TP^T $,
	\begin{align*}
	f\left(\rho_G(t,R,P)(X)\right)=&f(R(X+t1_n)P^T)=h(R(X+t1_n)P^T-\frac{1}{n}R(X+t1)P^T1_n1_n^T)\\
	&=h(R(X-\frac{1}{n}X1_n1_n^T)P^T)=h\left(\rho_{\RR^3 \times S_n}(R,P)(X-\frac{1}{n}X1_n1_n^T)\right)\\
	&=\rho_{\WT \times S_n}(R,P)h\left( X-\frac{1}{n}X1_n1_n^T\right)\\
	&= \rho_G(t,R,P) f(X).
	\end{align*}

\end{proof}
We now prove denseness of  $G$-equivariant polynomials  in the space of $G$-invariant continuous functions (Lemma~\ref{lem:approx}).
\lemapprox*
\begin{proof}[Proof of Lemma~\ref{lem:approx}]
Let $K \subseteq \RR^{3 \times n}$ be a compact set. We need to show that continuous $G$-equivariant functions can be approximated uniformly in $K$ by $G$-equivariant polynomials.  Let $K_0$ denote the compact set which is the  image of $K$ under the centralizing map $X \mapsto X-\frac{1}{n}X1_n1_n^T $. By Lemma~\ref{lem:translation}, it is sufficient to show that every $\SO \times S_n $ equivariant continuous function $f$ can be approximated uniformly on $K_0$ by a sequence of $\SO \times S_n $ equivariant polynomials $p_k$. The argument is now concluded by the following general lemma:
	\begin{Lemma}
		Let $G$ be a compact group, Let $\rho_1$ and $\rho_2$ be continuous\footnotemark representations of $G$ on the Euclidean spaces $W_1 $ and $W_2$. Let $K \subseteq W_1 $ be a compact set. Then every equivariant function $f:W_1 \to W_2$ can be approximated uniformly on $K$ by a sequence of equivariant polynomials $p_k:W_1 \mapsto W_2 $. 
	\end{Lemma}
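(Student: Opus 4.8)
The plan is to deduce this from the ordinary Stone--Weierstrass theorem by a group-averaging (Reynolds operator) argument. First I would enlarge $K$ to the set $\tilde K=\{\rho_1(g)x:\ g\in G,\ x\in K\}$, which is compact since it is the continuous image of the compact set $G\times K$, and which is $G$-invariant by construction. Fixing $\eps>0$ and choosing a basis of $W_2$, the classical Stone--Weierstrass theorem (applied to each coordinate function separately) yields an ordinary polynomial map $q:W_1\to W_2$ with $\sup_{y\in\tilde K}\|q(y)-f(y)\|<\eps$. The point of passing to $\tilde K$ is that after symmetrization we will still only be evaluating $q$ at points of $\tilde K$.

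Next I would symmetrize $q$ against the normalized Haar measure $\mu$ of the compact group $G$, setting
\begin{equation*}
p(x)=\int_G \rho_2(g^{-1})\,q\!\left(\rho_1(g)x\right)\,d\mu(g),
\end{equation*}
the integral taken coordinatewise in $W_2$. I would then check three things. (i) \emph{$p$ is a polynomial:} for each fixed $g$ the integrand $x\mapsto \rho_2(g^{-1})\,q(\rho_1(g)x)$ is a polynomial map of degree at most $d:=\deg q$, being $q$ pre- and post-composed with the linear maps $\rho_1(g)$ and $\rho_2(g^{-1})$; such maps form a finite-dimensional vector space, and in terms of a fixed monomial basis the coefficients of the integrand depend continuously on $g$, hence are $\mu$-integrable on the compact group $G$, so $p$ again lies in this space. (ii) \emph{$p$ is equivariant:} for $h\in G$, the substitution $g\mapsto gh$ together with invariance of $\mu$ and the identity $\rho_2((gh^{-1})^{-1})=\rho_2(h)\rho_2(g^{-1})$ gives $p(\rho_1(h)x)=\rho_2(h)p(x)$. (iii) \emph{$p$ approximates $f$:} since $f$ is equivariant, $\rho_2(g^{-1})f(\rho_1(g)x)=f(x)$ for every $g$, and $\rho_1(g)x\in\tilde K$ whenever $x\in\tilde K$, so for $x\in K\subseteq\tilde K$,
\begin{equation*}
\|p(x)-f(x)\|=\left\|\int_G \rho_2(g^{-1})\bigl(q(\rho_1(g)x)-f(\rho_1(g)x)\bigr)\,d\mu(g)\right\|\le \Bigl(\sup_{g\in G}\|\rho_2(g^{-1})\|\Bigr)\,\eps ,
\end{equation*}
and the supremum is finite because $g\mapsto\rho_2(g^{-1})$ is continuous on the compact group $G$. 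Letting $\eps$ run through a sequence tending to $0$ produces the required sequence $p_k$ of equivariant polynomials.

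The only non-routine point --- the part I expect to be the main obstacle --- is step (i): verifying that averaging a polynomial over the group yields a polynomial rather than merely a continuous function. The key observation is that for each fixed degree bound the polynomial maps $W_1\to W_2$ form a finite-dimensional space that is stable under pre- and post-composition with linear maps, and that $g\mapsto \rho_2(g^{-1})\circ q\circ \rho_1(g)$ is a continuous map into this finite-dimensional space; its integral against the finite measure $\mu$ is therefore well-defined and again an element of the space, i.e.\ a polynomial of degree $\le d$. Everything else is bookkeeping with the invariance of Haar measure and the compactness of $G$, the latter also supplying the operator-norm bound used in (iii).
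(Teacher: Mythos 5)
Your proof is correct and follows essentially the same route as the paper's: enlarge $K$ to the $G$-invariant compact set $\tilde K$, apply Stone--Weierstrass there, symmetrize with the Haar probability measure, and use the uniform bound on $\|\rho_2(g^{-1})\|$ to control the error. The only (cosmetic) difference is in justifying that the average is still a polynomial: you integrate coefficientwise in the finite-dimensional space of polynomial maps of degree $\le d$, while the paper approximates the integral by Riemann sums and uses closedness of that space in $C(\tilde K)$ --- both arguments are valid.
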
 
	\footnotetext{By this we mean that the maps $(g,X)\mapsto \rho_j(g)X, j=1,2  $ are jointly continuous}

	Let $\mu $ be the Haar probability measure associated with the compact group $G$. Let  $K_1$ denote the compact set obtained as an image of the compact set $G\times K $ under the continuous mapping 
	$$(g,X)\mapsto \rho_1(g)X .$$ 
	Using the Stone-Weierstrass theorem, let $p_k$ be a sequence of (not necessarily equivariant) polynomials which approximate $f$ uniformly on $K_1$. Every degree $D$ polynomial $p:W_1 \to W_2 $ induces a $G$-equivariant function
	$$\langle p \rangle(X)=\int_G \rho_2(g^{-1}) p(\rho_1(g)X)d\mu(g) .$$
	This function $\langle p \rangle$ is a degree $D$ polynomial as well: This is because  $\langle p \rangle$ can be approximated uniformly on $K_1$ by ``Riemann Sums" of the form $\sum_{j=1}^N w_j \rho_2(g_j^{-1}) p(\rho_1(g_j)X) $ which are degree $D$ polynomials, and because degree $D$ polynomials are closed in $C(K_1) $. 
	
	Now for all $X \in K_1$, continuity of the function $g \mapsto \rho_2(g^{-1}) $ implies that the  operator norm of $\rho_2(g^{-1}) $ is bounded uniformly by some constant $N>0$, and so
	\begin{align*}
	|\langle p_k \rangle(X)-f(X)|&=\left|\int_G \rho_2(g^{-1})p_k(\rho_1(g) X)-\rho_2(g^{-1})f(\rho_1(g)X)d\mu(g) \right|\\
	&=\left|\int_G \rho_2(g^{-1})\left[p_k(\rho_1(g) X)-f(\rho_1(g)X)\right]d\mu(g) \right|
	\leq N\|f-p_k\|_{C(K_1)}\rightarrow 0
	\end{align*}
\end{proof}

We prove Theorem~\ref{thm:general}
\thmgeneral*
\begin{proof}
	By the $D$-spanning assumption, there exist $f_1,\ldots,f_K \in \Ffeat $ such that any vector valued polynomial $p:\RR^{3 \times n} \to \RR^n $ invariant to translations and equivariant to permutations is of the form 
	\begin{equation}\label{eq:char2}
	p(X)=\sum_{k=1}^K \hat \Lambda_k(f_k(X)), 
	\end{equation}
	 where $\Lambda_k$ are  linear functions to $\RR$. If  $p$ is a matrix value polynomial mapping $\RR^{3\times n}$ to $\Wtarget^n=\RR^{t \times n}$,  which is   invariant to translations and equivariant to permutations, then it is of the form $p=(p_{ij})_{i \in [t], j \in [n]} $, and each $p_i=(p_{ij})_{j\in [n]}$ is itself invariant to translations and permutation equivariant. It follows that matrix valued $p$ can also be written in the form \eqref{eq:char2}, the only difference being that the image of the linear functions $\Lambda_k$ is now $\RR^t$.  
	
	Now let  $p:\RR^{d\times n} \to \Wtarget $ be a $G$-equivariant polynomial of degree $\leq D$. It remains to show that we can choose $\Lambda_k$ to be $\SO$ equivariant. We do this by a symmetrization argument: denote the Haar probability measure on $\SO$ by $\nu$, and the action of $\SO$ on $\Wfeat$ and $\Wtarget$ by $\rho_1$ and $\rho_2$ respectively Denote $p=(p_j)_{j=1}^n $ and  $f_k=(f_k^j)_{j=1}^n $. For every $j=1,\ldots,n$, we   use the $\SO$ equivariance of $p_j$ and $f_k^j$ to obtain
	\begin{align*}
	p_j(X)&=\int_{\SO} \rho_2(R^{-1})\circ p_j(RX)d\nu(R)= \sum_{k=1}^{K} \int_{\SO} \rho_2(R^{-1})\circ  \Lambda_k \circ   f_j^k(RX)   d\nu(R)\\
	&=\sum_{k=1}^{K} \int_{\SO} \rho_2(R^{-1}) \circ  \Lambda_k  \left(\rho_1(R) \circ f_k^j(X)  \right) d\nu(R)=
	\sum_{k=1}^{K} \tilde \Lambda_k \circ  f_k^j(X),
	\end{align*}
	where  $ \tilde \Lambda_k$ stands for the equivariant linear functional from $\Wfeat $ to $\Wtarget$, defined for $w\in \Wfeat$ by 
	$$\tilde \Lambda_k(w)=\int_{\SO} \rho_2(R^{-1}) \circ  \Lambda_k  \left(\rho_1(R)  w  \right) d\nu(R).$$
	Thus we have shown that $p$ is in $\F_C(\Ffeat,\Fpool)$ for $C=K $, as required.
\end{proof}

\section{Proofs for Section~\ref{sec:tensor}}
We prove Lemma~\ref{lem:Qspanning}
\lemQspanning*
\begin{proof}

It is known \citep{segol2019universal} (Theorem 2) that polynomials $p:\RR^{3 \times n} \to \RR^n $ which are $S_n$-equivariant, are spanned by polynomials of the form $p_{\vec{\alpha}}=(p^j_{\vec{\alpha}})_{j=1}^n $, defined as  
\begin{equation}\label{eq:nimrod}
p^j_{\vec{\alpha}}(X)=\sum_{i_2,\ldots,i_K=1}^n x_j^{\alpha_1} x_{i_2}^{\alpha_2}\ldots x_{i_k}^{\alpha_k}
\end{equation}
 where $\vec{\alpha}=(\alpha_1,\ldots,\alpha_K)$ and each $\alpha_k\in \NNplus^3$ is a multi-index.
 It follows that $S_n$-equivariant polynomials of degree $\leq D$ are spanned by polynomials of the form $p^j_{\vec{\alpha}}$ where $\sum_{k=1}^K |\alpha_k|  \leq D $. Denoting $r_k=|\alpha_k|, k=1,\ldots K $, the sum of all $r_k$ by $T$, and $\vec{r}=(r_k)_{k=1}^K $, we see that that exists a linear functional $\Lambda_{\vec{\alpha},\vec{r}}: \T_T \to \RR$ such that
 $$p^j_{\vec{\alpha}}(X)=\Lambda_{\vec{\alpha},\vec{r}} \circ Q_j^{\vec{r}}(X) $$ 
 where we recall that  $Q^{\vec{r}}=\left(Q_j^{(\vec{r})}(X) \right)_{j=1}^n $ is defined in \eqref{Q} as 
\begin{equation*}
Q_j^{(\vec{r})}(X)=\sum_{i_2,\ldots,i_K=1}^n x_j^{\otimes r_1}\otimes x_{i_2}^{\otimes r_2} \otimes  x_{i_3}^{\otimes r_3}\otimes \ldots \otimes  x_{i_K}^{\otimes r_K}.
\end{equation*}

Thus polynomials $p=(p_j)_{j=1}^n $ which are of degree $\leq D$, and are $S_n$ equivariant, can be written as 
$$p_j(X) =\sum_{T \leq D}\sum_{\vec{r} \in \Sigma_T}\sum_{\vec{\alpha}| |\alpha_k|=r_k} \Lambda_{\vec{\alpha},\vec{r}}\left(Q^{(\vec r)}_j(X)\right)=\sum_{T \leq D}\sum_{\vec{r} \in \Sigma_T}  \Lambda_{\vec{r}}\left(\iota \circ Q^{(\vec r)}_j(X)\right), j=1,\ldots,n ,$$
where $\Lambda_{\vec{r}}= \sum_{\vec{\alpha}| |\alpha_k|=r_k} \Lambda_{\vec{\alpha},\vec{r}}\circ \iota_T^{-1} $, and $\iota_T^{-1}$ is the left inverse of the embedding $\iota $. 
If $p$ is also translation invariant, then 
$$p(X)=p(X-\frac{1}{n}X1_n1_n^T)= \sum_{T \leq D}\sum_{\vec{r} \in \Sigma_T}  \hat \Lambda_{\vec{r}}\left(\iota \circ Q^{(\vec r)}(X-\frac{1}{n}X1_n1_n^T)\right).$$
Thus $\gQ_D$ is $D$-spanning. 
\end{proof}

We prove Lemma~\ref{lem:Dtensor}
\lemDtensor*  
\begin{proof}
In this proof we make the dependence of $\Ffeat$ on $D$ explicit and denote $\Ffeat(D) $.

We prove the claim by induction on $D$. Assume $D=0$. Then $\gQ_0$ contains only the constant function $X \mapsto 1_n\in \T_0^n$, and this is precisely  the function $\pi_V \circ\mathrm{ext}\in \Ffeat(0) $. 

Now assume the claim holds for all $D'$ with $D-1\geq D' \geq 0 $, and prove the claim for $D$. Choose $\vec{r}=(r_1,\ldots,r_k)\in \Sigma_T$ for some $T \leq D $, we need to show that  the function $Q^{(\vec{r})} $ is in $\Ffeat(D)$.  Since  $\Ffeat(D-1)\subseteq \Ffeat(D)$ we know from the induction hypothesis that this is true if $T<D$. Now assume $T=D$. We consider two cases:
\begin{enumerate}
\item If $r_1>0$, we set $\tilde{r}=(r_1-1,r_2,\ldots,r_K)$. We know that $\iota \circ Q^{(\tilde{r})}(\Xc)\in \Ffeat(D-1) $ by the induction hypothesis. So there exist $f_2,\ldots,f_{D} $ such that
\begin{equation}\label{eq:D-1}
\iota \circ \pi_V\circ f_2\circ \ldots\circ f_{D}\circ \mathrm{ext}(\Xc)=\iota \circ Q^{(\tilde{r})}(\Xc) .
\end{equation}
Now choose $f_1\in \F_{min}$ to be the function whose $V$ coordinate $\tilde V=(\tilde{V}_j)_{j=1}^n$, is given by $\tilde{V}_j(X,V)=x_j \otimes V_j $, obtained   	by setting $\theta_1=1, \theta_2=0 $ in \eqref{eq:minimal}. Then , we have
\begin{align*}
\tilde{V}_j(\Xc,Q^{(\tilde{r})}(\Xc))&=\sum_{i_2,\ldots,i_K=1}^n \xc_j \otimes \xc_j^{\otimes (r_1-1)}\otimes \xc_{i_2}^{\otimes r_2} \otimes \ldots \otimes \xc_{i_K}^{\otimes r_K}  \\
&= Q_j^{(\vec{r})}(\Xc).
\end{align*}
and so
\begin{equation}\label{eq:D}
\iota \circ \pi_V\circ f_1 \circ  f_2\circ \ldots\circ f_{D}\circ \mathrm{ext}(X-\frac{1}{n}X1_n1_n^T)=\iota \circ Q^{(\vec{r})}(\Xc) .\end{equation}
and $\iota \circ Q^{(\vec{r})}(X-\frac{1}{n}X1_n1_n^T)\in \Ffeat(D)$.
\item If $r_1=0$. We assume without loss of generality that $r_2>0$. Set $\tilde{r}=(r_2-1,r_3,\ldots,r_K) $. As before by the induction hypothesis there exist $f_2,\ldots,f_D $ which satisfy \eqref{eq:D-1}. This time we  choose $f_1\in \F_{min}$ to be the function whose $V$ coordinate $\tilde V=(\tilde{V}_j)_{j=1}^n$, is given by $\tilde{V}_j(X,V)=\sum_j x_j \otimes V_j $, obtained   	by setting $\theta_1=0, \theta_2=1 $ in \eqref{eq:minimal}. 
Then we have
\begin{align*}
\tilde{V}_j(\Xc,Q^{(\tilde{r})}(\Xc))&=\sum_{j=1}^n\sum_{i_3,\ldots,i_K=1}^n \xc_j \otimes \xc_j^{\otimes (r_2-1)}\otimes \xc_{i_3}^{\otimes r_2} \otimes \ldots \otimes \xc_{i_K}^{\otimes r_K}\\
&=\sum_{i_2,i_3,\ldots,i_K=1}^n \xc_{i_2}^{\otimes r_2}\otimes \xc_{i_3}^{\otimes r_2} \otimes \ldots \otimes \xc_{i_K}^{\otimes r_K}\\
&= Q_j^{(\vec{r})}(\Xc).
\end{align*}
Thus \eqref{eq:D} holds, and so again we have that $\iota \circ Q^{(\vec{r})}(X-\frac{1}{n}X1_n1_n^T)\in \Ffeat(D)$.
\end{enumerate}
\end{proof}

Finally we prove Lemma~\ref{lem:Dspan}
\lemDspan*
\begin{proof}
If the conditions in Lemma~\ref{lem:Dspan} hold, then since $\gQ_D$ is $D$-spanning, every translation invariant and permutation equivariant polynomials $p$ of degree $D$ can be written as 
\begin{align*}
p(X)&=\sum_{\vec{r}| \|\vec{r}\|_1 \leq D} \hat \Lambda_{\vec{r}}\left(\iota \circ Q^{(\vec{r})}(X-\frac{1}{n}X1_n1_n^T)\right)=\sum_{\vec{r}| \|\vec{r}\|_1 \leq D} \hat \Lambda_{\vec{r}}\left(\sum_{k=1}^{K_{\vec{r}}} \iota \circ  \hat A_{k,\vec{r}} f_{k,\vec{r}}(X)  \right)
\\
&=\sum_{\vec{r}| \|\vec{r}\|_1 \leq D}\sum_{k=1}^{K_{\vec{r}}} \hat{\Lambda}_{k,\vec{r}}\left(f_{k,\vec{r}}(X)\right)  
\end{align*}
where we denote $\Lambda_{k,\vec{r}}=\Lambda_{\vec{r}}\circ \iota \circ A_{k,\vec{r}}$. Thus we proved $\Ffeat$ is $D$-spanning. 
\end{proof}
\section{Proofs for Section~\ref{sec:irreducible}}
We prove Lemma~\ref{lem:lineaversal}
\lemlineaversal*
\begin{proof}
As mentioned in the main text, this lemma is based on Schur's lemma. This lemma is typically stated for complex representations, but holds for odd dimensional real representation as well. We recount the lemma and its proof here for completeness (see also \citep{fulton2013representation}).
\begin{Lemma}[Schur's Lemma for $\SO$]
	Let  $\Lambda: W_{\ell_1} \to W_{\ell_2}$ be a linear equivariant map. If $\ell_1\neq \ell_2$ then $\Lambda=0$. Otherwise $\Lambda$ is a scalar multiply of the identity. 
\end{Lemma}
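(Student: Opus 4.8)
The plan is to run the classical Schur argument, taking care at the one point where real scalars require an extra observation.

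First I would dispose of the case $\ell_1 \neq \ell_2$. Given an equivariant linear map $\Lambda : W_{\ell_1} \to W_{\ell_2}$, equivariance makes $\ker \Lambda$ an $\SO$-invariant subspace of $W_{\ell_1}$ and $\mathrm{im}\,\Lambda$ an $\SO$-invariant subspace of $W_{\ell_2}$. Since $W_{\ell_1}$ and $W_{\ell_2}$ are irreducible, $\ker \Lambda \in \{0, W_{\ell_1}\}$ and $\mathrm{im}\,\Lambda \in \{0, W_{\ell_2}\}$. Hence either $\Lambda = 0$, or $\Lambda$ is a linear isomorphism; in the latter case $2\ell_1 + 1 = \dim W_{\ell_1} = \dim W_{\ell_2} = 2\ell_2 + 1$, i.e.\ $\ell_1 = \ell_2$. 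So when $\ell_1 \neq \ell_2$ we must have $\Lambda = 0$.

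Next, for $\ell_1 = \ell_2 = \ell$, I would show that any equivariant $\Lambda : W_\ell \to W_\ell$ is a real scalar multiple of the identity. Over $\RR$ this is not immediate, since $\Lambda$ need not have a real eigenvalue; this is exactly where odd-dimensionality enters. I would complexify: extend $\Lambda$ to a complex-linear endomorphism $\Lambda_{\mathbb{C}}$ of $W_\ell \otimes_{\RR} \mathbb{C}$. The relevant structural fact about $\SO$ is that its odd-dimensional real irreducible representations $W_\ell$ are of real type, so $W_\ell \otimes_{\RR} \mathbb{C}$ is an irreducible \emph{complex} $\SO$-representation. Then $\Lambda_{\mathbb{C}}$ has an eigenvalue $\lambda \in \mathbb{C}$, and $\Lambda_{\mathbb{C}} - \lambda\,\mathrm{Id}$ is equivariant with nonzero kernel; the kernel/image argument applied over $\mathbb{C}$ forces $\Lambda_{\mathbb{C}} = \lambda\,\mathrm{Id}$. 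Since $\Lambda_{\mathbb{C}}$ is represented, in any real basis of $W_\ell$, by the same (real) matrix as $\Lambda$, the equality $\Lambda_{\mathbb{C}} = \lambda\,\mathrm{Id}$ forces $\lambda \in \RR$, and therefore $\Lambda = \lambda I$ with $\lambda$ real. (Equivalently, one may invoke the Frobenius theorem: $\mathrm{End}_{\SO}(W_\ell)$ is a finite-dimensional real division algebra, hence $\RR$, $\mathbb{C}$, or $\mathbb{H}$; the latter two would make $\dim_{\RR} W_\ell$ even or divisible by $4$, contradicting $\dim_{\RR} W_\ell = 2\ell + 1$, so it is $\RR$.)

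The only genuine obstacle is the passage from ``$\Lambda$ is zero or an isomorphism'' to ``$\Lambda$ is a scalar'' in the real setting: the naive eigenvalue argument fails over $\RR$, and it is precisely the odd dimension $2\ell+1$ of the $\SO$-irreducibles (equivalently, their being of real type) that rescues it, via either complexification or Frobenius. Everything else is the textbook Schur computation. Finally, to recover Lemma~\ref{lem:lineaversal} one applies this lemma block-by-block to $\Lambda$ restricted to each summand $W_{\ell_i^{(1)}} \to W_{\ell_j^{(2)}}$: blocks with $\ell_i^{(1)} \neq \ell_j^{(2)}$ vanish, giving the stated sparsity pattern $M_{ij} = 0$, and the remaining blocks are scalars, which are exactly the entries $M_{ij}$ in \eqref{eq:all_linear}.
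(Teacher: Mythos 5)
Your proof is correct, but it takes a heavier route than the paper's, and along the way you misdiagnose the role of odd-dimensionality. You write that ``the naive eigenvalue argument fails over $\RR$'' because ``$\Lambda$ need not have a real eigenvalue'' --- but this is exactly backwards here: since $\dim W_\ell = 2\ell+1$ is odd, the characteristic polynomial of $\Lambda$ has odd degree and hence a real root, so $\Lambda$ \emph{always} has a real eigenvalue $\lambda$. The paper's proof uses precisely this observation and nothing more: $\Lambda-\lambda I$ is equivariant with nontrivial kernel, hence zero by the kernel/image argument already established. That one line is the entire content of the odd-dimensionality hypothesis. Your complexification argument is valid, but it requires the additional structural input that $W_\ell$ is of real type (i.e., that $W_\ell\otimes_{\RR}\mathbb{C}$ remains irreducible as a complex $\SO$-representation), which you assert without proof; the Frobenius alternative is likewise valid but imports the classification of finite-dimensional real division algebras. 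Both are standard facts, so nothing is wrong, but the elementary odd-degree-polynomial argument makes the lemma self-contained. Your treatment of the $\ell_1\neq\ell_2$ case and the block-by-block reduction of Lemma~\ref{lem:lineaversal} match the paper.
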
  
\begin{proof}
Let  $\Lambda: W_{\ell_1} \to W_{\ell_2}$ be a linear equivariant map.	The image and kernel of $\Lambda$ are invariant subspaces of $W_{\ell_1}$ and $W_{\ell_2}$, respectively. It follows that if $\Lambda\neq 0$ then $\Lambda$ is a linear isomorphism so necessarily $\ell_1=\ell_2$. Now assume $\ell_1=\ell_2$. Since the dimension of $W_{\ell_1} $ is odd, $\Lambda$ has a real eigenvalue $\lambda$. The linear function $\Lambda-\lambda I $ is equivariant and has a non-trivial kernel, so $ \Lambda-\lambda I=0$.  
\end{proof}
We now return to the proof of Lemma~\ref{lem:lineaversal}. Note that each $\Lambda_j:W_{\vl^{(1)}} \to W_{\ell_j^{(2)}} $ is linear and $\SO$ equivariant. Next denote the restrictions  of each $\Lambda_j$ to $W_{\ell_i^{(1)}}, i=1,\ldots,K_2 $ by $\Lambda_{ij} $, and note that
\begin{equation}\label{eq:Lambda}
\Lambda_j(V_1,\ldots,V_{K_1})=\sum_{i=1}^{K_1} \Lambda_{ij}(V_i).
\end{equation}
By considering vectors in $W_{\vl^{(1)}} $ of the form $(0,\ldots,0,V_i,0\ldots,0) $ we see that  each $\Lambda_{ij}:W_{\ell_i^{(1)}} \to W_{\ell_j^{(2)}}$ is linear and $\SO$-equivariant. Thus by Schur's lemma, if $\ell_i^{(1)}=\ell_j^{(2)}$ then $\Lambda_{ij}(V_i)=M_{ij}V_i $ for some real $M_{ij} $, and otherwise $M_{ij}=0$. Plugging this into \eqref{eq:Lambda} we obtain \eqref{eq:all_linear}.

\end{proof}

We prove Theorem~\ref{thm:TFN} which shows that the TFN network described in the main text is universal:
\thmTFN*

\begin{proof}

As mentioned in the main text, we only need to show that the function space $\Ffeat(D)$ is $D$-spanning.  Recall that $\Ffeat(D)$ is obtained by $2D$ consecutive convolutions with $D$-filters. In general, we denote  the space of functions defined by applying $J$ consecutive convolutions by $\Fnew{J,D}$. 

If $\gY$ is a space of functions from $\RR^{3 \times n} \to Y^n $, we denote by $\langle \gY,\T_T \rangle$ 
the space of all functions $ p:\RR^{3 \times n} \to \T_T^n$ of the form
\begin{equation}\label{eq:gendef}
p(X)=\sum_{k=1}^K \hat A_k f_k(X) ,\end{equation}
where $A_k: Y \to \T_T $ are linear functions, $\hat A_k:Y^n \to \T_T^n$ are induced by elementwise application, and $f_k \in \gY $. This notation is useful because: (i) by Lemma~\ref{lem:Dspan} it is sufficient to show that $Q^{(\vec{r})}(\Xc)$ is in $ \Fgen{2D,D}{\T_T}$ for all $\vec{r}\in \Sigma_T $ and all $T \leq D $, and because (ii) it enables comparison of the expressive power of function spaces $\gY_1,\gY_2$ whose elements map to different spaces $Y_1^n,Y_2^n $, since the elements in $\langle \gY_i,\T_T \rangle, i=1,2 $ both map to the same space. In particular, note that if for every $f \in \gY_2 $ there is a $g\in \gY_1$ and a linear map  $A:Y_1\to Y_2 $ such that $f(X)=\hat A\circ g(X) $, then  $\langle \gY_2,\T_T\rangle \subseteq \langle \gY_1,\T_T\rangle  $.  

We now use this abstract discussion to prove some useful results: the first is that for the purpose of this lemma, we can `forget about' the multiplication by a unitary matrix in \eqref{eq:conv}, used for decomposition into irreducible representations: To see this,  denote by $\Fnewtilde{J,D}$ the function space obtained by taking $J$ consecutive convolutions with $D$-filters without multiplying by a unitary matrix in \eqref{eq:conv}. Since Kronecker products of unitary matrices are unitary matrices, we obtain that the elements in $\Fnew{J,D}$ and $\Fnewtilde{J,D}$ differ only by multiplication by a unitary matrix, and thus $\langle\Fnewtilde{J,D},\T_T \rangle\subseteq \langle \Fnew{J,D},\T_T \rangle$ and $\langle \Fnew{J,D},\T_T \rangle \subseteq \langle \Fnewtilde{J,D},\T_T \rangle$, so both sets are equal.  

Next, we prove  that adding convolutional layers (enlarging $J$) or taking higher order filters (enlarging $D$) can only increase the expressive power of a network. 
\begin{Lemma}\label{lem:include}
	For all $J,D,T \in \NNplus $,
	\begin{enumerate}
		\item  $\Fgen{J,D}{\T_T}\subseteq \Fgen{J+1,D}{\T_T} $.
		\item   $\Fgen{J,D}{\T_T}\subseteq \Fgen{J,D+1}{\T_T} $.
	\end{enumerate}
\end{Lemma}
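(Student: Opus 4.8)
Both inclusions follow from the reduction recorded in the remark just before Lemma~\ref{lem:include}: to get $\langle\gY_2,\T_T\rangle\subseteq\langle\gY_1,\T_T\rangle$ it is enough to exhibit, for every $f\in\gY_2$, some $g\in\gY_1$ and a fixed linear map $A$ with $f=\hat A\circ g$. So for part~1 I would take $\gY_2=\Fnew{J,D}$, $\gY_1=\Fnew{J+1,D}$, and realize every $J$-layer $D$-filter feature network as a linear readout of a $(J{+}1)$-layer network; for part~2, $\gY_2=\Fnew{J,D}$, $\gY_1=\Fnew{J,D+1}$, realizing it as a linear readout of a $(D{+}1)$-filter network. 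Throughout I would also use the reduction from the proof of Theorem~\ref{thm:TFN} that lets us drop the Clebsch--Gordan unitaries, i.e.\ work with $\Fnewtilde{J,D}$ in place of $\Fnew{J,D}$.

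\textbf{Part 1.} Given $f=\pi_V\circ f^J\circ\cdots\circ f^1\circ\mathrm{ext}\in\Fnew{J,D}$, append one more convolutional layer $f^{J+1}\in\F_D$ whose filter vanishes identically ($R^{(\ell)}\equiv 0$ for every $\ell$) and whose residual parameter is $\theta_0=1$ in \eqref{eq:conv}. By \eqref{eq:conv} this layer acts elementwise as $(X,V)\mapsto(X,\hat L(V))$, where $L$ is the fixed injective linear map sending $W_{\vl}$ onto the copies of its irreducible summands that sit inside the output representation (composed with the unitary $U$, which may be discarded anyway). Hence $g:=\pi_V\circ f^{J+1}\circ f^J\circ\cdots\circ f^1\circ\mathrm{ext}\in\Fnew{J+1,D}$ satisfies $g=\hat L\circ f$, so $f=\hat A\circ g$ for any linear left inverse $A$ of $L$.

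\textbf{Part 2.} Given $f=\pi_V\circ f^J\circ\cdots\circ f^1\circ\mathrm{ext}\in\Fnew{J,D}$, replace each $D$-filter layer $f^j$ by the $(D{+}1)$-filter layer $\tilde f^j\in\F_{D+1}$ that keeps the same radial polynomials $R^{(0)},\ldots,R^{(D)}$ (admissible since their degree is $\le D\le D+1$) and the same $\theta_0$, and sets the new top component $R^{(D+1)}\equiv 0$. Let $g:=\pi_V\circ\tilde f^J\circ\cdots\circ\tilde f^1\circ\mathrm{ext}\in\Fnew{J,D+1}$. I would prove $g=\hat\iota\circ f$, where $\iota$ is the canonical embedding of the output representation of the $D$-filter network into that of the $(D{+}1)$-filter network as the sub(multi)set of irreducible blocks produced by filter orders $\le D$. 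This is an induction on the layer index: a $(D{+}1)$-filter with vanishing top component is the corresponding $D$-filter padded by zeros, and since the convolution \eqref{eq:conv} (residual term included) is linear in its feature argument, feature components that are zero on a layer's input stay zero on its output while the remaining components reproduce the $D$-filter computation verbatim. Thus the spurious irreducible blocks created by the extra filter order remain identically zero all the way through, so $g$ is the zero-padding of $f$ and $f=\hat A\circ g$ for any linear left inverse $A$ of $\iota$.

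\textbf{Main obstacle.} The only real content is the layer-by-layer bookkeeping in part~2: one must track how the irreducible type of the feature grows under the iterated Clebsch--Gordan decompositions and verify that the blocks introduced by order $D{+}1$ never feed back into the useful blocks — which works precisely because a convolutional layer is linear in its feature input (a genuinely nonlinear layer would break the argument). A minor nuisance is the slightly informal dimensional status of the $\theta_0V_a$ term in \eqref{eq:conv}, which one reads as addition into the matching irreducible components and which is in any case harmless once the unitaries $U$ are removed via the $\Fnewtilde{J,D}$ reduction.
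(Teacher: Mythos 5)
Your proposal is correct and follows essentially the same route as the paper: part~1 is handled by appending a convolutional layer with $F=0$, $\theta_0=1$ (an identity up to a fixed injective linear map, undone by a linear readout), and part~2 by zero-padding $D$-filters to $(D{+}1)$-filters and observing that the spurious irreducible blocks stay zero. The paper states both steps in two sentences; your version merely spells out the linear-readout bookkeeping that the remark preceding the lemma already licenses.
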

\begin{proof}
The first claim follows from the fact that every function $f$ in  $\Fgen{J,D}{\T_T}$ can be identified with a function in $\Fgen{J+1,D}{\T_T}$ by taking the $J+1$ convolutional layer in \eqref{eq:conv} with $\theta_0=1, F=0 $. 

The second claim follows from the fact that $D$-filters can be identified with $D+1$-filters whose $D+1 $-th entry is $0$. 
\end{proof}

The last preliminary lemma we will need is 
\begin{Lemma}\label{lem:sub}
	For every $J,D\in \NNplus$, and every $t ,s \in \NNplus $, if $p \in \Fgen{J,D}{\T_t} $, then  the function $q $ defined by 
	$$q_a(X)=\sum_{b=1}^n(\xc_a-\xc_b)^{\otimes s} \otimes p_b(X) $$
	is in $\Fgen{J+1,D}{\T_{t+s}} $.
\end{Lemma}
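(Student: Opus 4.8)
The plan is to realize $q$ by appending one more convolution layer to the functions that generate $p$, followed by a linear (self-interaction-type) layer. Since, as already noted in the text, $\Fgen{J,D}{\T_T}=\langle\Fnewtilde{J,D},\T_T\rangle$ for all $J,T$, I will work throughout with the unitary-free convolutions $\Fnewtilde{\cdot,D}$, which removes the bookkeeping of the matrices $U(\cdot,\cdot)$. Write $p\in\Fgen{J,D}{\T_t}=\langle\Fnewtilde{J,D},\T_t\rangle$ as $p=\sum_{k=1}^{K}\hat A_k g_k$ with $g_k\in\Fnewtilde{J,D}$ and each $A_k$ a linear map into $\T_t$, so that the $a$-th column of $p(X)$ is $p_a(X)=\sum_k A_k\!\left(g_k^a(X)\right)$. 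The goal becomes: append to each $g_k$ a single unitary-free convolution with a \emph{fixed} filter $F$ and $\theta_0=0$, producing $g'_k\in\Fnewtilde{J+1,D}$ with $g'_{k,a}(X)=\sum_{b}F(x_a-x_b)\otimes g_k^b(X)$, and then exhibit linear maps $B_k$ with $\sum_k\hat B_k g'_k=q$.

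The crux is a statement about TFN filters: \emph{for $s\le D$ there is a filter $F_s\colon\RR^3\to W_{\vl_D}$ of the form \eqref{eq:multi_filter} and a linear map $L_s\colon W_{\vl_D}\to\T_s$ with $L_s\!\left(F_s(x)\right)=x^{\otimes s}$ for all $x\in\RR^3$.} To see this, take $R^{(\ell)}$ to be the monomial of degree $s$ whenever $0\le\ell\le s$ and $\ell\equiv s\ (\mathrm{mod}\ 2)$, and $R^{(\ell)}\equiv 0$ otherwise (legitimate since the nonzero radial polynomials then have degree $s\le D$ and the required $\ell$ lie in $[0,D]$). The resulting components of $F_s$ are $\|x\|^{s}Y^{\ell}_m(\hat x)=\|x\|^{\,s-\ell}\bigl(\|x\|^{\ell}Y^{\ell}_m(\hat x)\bigr)$ --- i.e.\ solid harmonics of degree $\ell$ times even powers of $\|x\|$ --- and by the classical harmonic decomposition of homogeneous polynomials on $\RR^3$ these span the space of all homogeneous polynomials of degree $s$, hence in particular every coordinate function $x\mapsto x_{i_1}\cdots x_{i_s}$ of $x^{\otimes s}$. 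Thus some linear $L$ satisfies $L(F_s(x))=x^{\otimes s}$; averaging $L$ over $\SO$ against the Haar measure, using $\SO$-equivariance of $x\mapsto F_s(x)$ and of $x\mapsto x^{\otimes s}$ exactly as in the proof of Theorem~\ref{thm:general}, turns $L$ into an equivariant $L_s$ that still satisfies $L_s(F_s(x))=x^{\otimes s}$.

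Now take $F=F_s$, so that $g'_{k,a}(X)=\sum_b F_s(x_a-x_b)\otimes g_k^b(X)$ and $g'_k\in\Fnewtilde{J+1,D}$, and set $B_k=L_s\otimes A_k$, a linear map from the output representation of $g'_k$ into $\T_s\otimes\T_t=\T_{t+s}$. Then, by bilinearity of $\otimes$,
\begin{align*}
\sum_{k=1}^{K}\hat B_k g'_{k,a}(X)
&=\sum_{k=1}^{K}\sum_{b=1}^{n}\bigl(L_s\otimes A_k\bigr)\!\bigl(F_s(x_a-x_b)\otimes g_k^b(X)\bigr)\\
&=\sum_{b=1}^{n}\sum_{k=1}^{K}L_s\!\bigl(F_s(x_a-x_b)\bigr)\otimes A_k\!\bigl(g_k^b(X)\bigr)\\
&=\sum_{b=1}^{n}(x_a-x_b)^{\otimes s}\otimes\Bigl(\sum_{k=1}^{K}A_k\!\bigl(g_k^b(X)\bigr)\Bigr)
=\sum_{b=1}^{n}(x_a-x_b)^{\otimes s}\otimes p_b(X).
\end{align*}
Since centralization is a translation, $\xc_a-\xc_b=x_a-x_b$, so the last expression equals $q_a(X)$. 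Hence $q=\sum_{k}\hat B_k g'_k\in\langle\Fnewtilde{J+1,D},\T_{t+s}\rangle=\Fgen{J+1,D}{\T_{t+s}}$.

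I expect the only genuine obstacle to be the filter claim of the second paragraph: checking that a single radial/spherical-harmonic filter of the TFN form, post-composed with a linear map, reproduces $x\mapsto x^{\otimes s}$. This is where the bound $s\le D$ enters (the radial monomials have degree $s$), and it is the only place the parameter $D$ is used; the restriction is harmless because the lemma is only ever applied with $s\le D$. Everything else is linear-algebraic assembly: the reduction to the unitary-free model, bilinearity of $\otimes$, and the identity $\xc_a-\xc_b=x_a-x_b$.
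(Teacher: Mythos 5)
Your proof is correct and follows essentially the same route as the paper's: build a $D$-filter whose components span the $s$-homogeneous polynomials via the harmonic decomposition $\|x\|^{s-\ell}Y^\ell_m(x)$, so that $x^{\otimes s}$ is a linear image of the filter output, then append one convolution with $\theta_0=0$ and post-compose with a linear map. The only differences are cosmetic: your $\SO$-symmetrization of $L_s$ is unnecessary (the maps $A_k$ in the definition of $\Fgen{J,D}{\T_T}$ need not be equivariant), and the implicit restriction $s\le D$ that you flag is present in the paper's argument as well.
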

\begin{proof}



This lemma is based on the fact that    the space of  $s$ homogeneous polynomial on $\RR^3$ is spanned by polynomials of the form $\|x\|^{s-\ell}Y^{\ell}_m(x) $ for  $\ell=s,s-2,s-4 \ldots $ \citep{dai2013approximation}. For each such $\ell$, and $s \leq D$, these polynomials can be realized by filters $F^{(\ell)} $ by setting $R^{(\ell)}(\|x\|)=\|x\|^s $ so that 
$$F_m^{(\ell)}(x)=\|x\|^sY^{\ell}_m(\hat x)=\|x\|^{s-\ell}Y_m^\ell(x).$$
For every $D\in \NN$ and $s \leq D $, we can construct a $D$-filter $F^{s,D}=(F^{(0)},\ldots,F^{(D)}) $ where $F^{(s)},F^{(s-2)}, \ldots  $ are as defined above and the other filters are zero. Since both the entries of $F^{s,D}(x) $, and the entries of $x^{\otimes s} $, span the space of $s$-homogeneous polynomials on $\RR^3$, it follows that there exists a linear mapping $B_s:W_{\vl_D} \to \T_s$ so that 
\begin{equation}\label{eq:polybases}
x^{\otimes s}=B_s(F^{s,D}(x)), \forall x \in \RR^3. 
\end{equation}

Thus, since $p$ can be written as a sum of compositions 
of linear mappings with functions in $\Fnew{J,D}$ as in \eqref{eq:gendef}, 
and similarly $x^{\otimes s} $ is obtained as a linear image of functions in $\Fnew{1,D} $ as in \eqref{eq:polybases}, we deduce that  
$$\sum_{b=1}^n (x_a-x_b) \otimes p_b(X)=\sum_{b=1}^n (\xc_a-\xc_b) \otimes p_b(X) $$
is in  $\Fgen{J+1,D}{\T_{t+s}} $ 
\end{proof}

As a final preliminary, we note that $D$-filters  can perform an averaging operation  by setting $R^{(0)}=1 $ and  $\theta_0, R^{(1)},\ldots,R^{(D)}=0$ in \eqref{eq:multi_filter} and \eqref{eq:conv} . We call this $D$-filter an \emph{averaging filter}. 

We are now ready to prove our claim: we need to show 
that for every $D,T \in \NNplus$ where $T \leq D$, for  every $\vec{r}\in \Sigma_T$, the function $Q^{(\vec{r})} $ is in $\Fgen{2D,D}{\T_T} $. Note that due to the inclusion relations in Lemma~\ref{lem:include} it is sufficient to prove this for the case $T=D$. We prove this by induction on $D$. For $D=0$, vectors   $\vec{r}\in \Sigma_0$ contains only zeros and so 
$$Q^{(\vec{r})}(\Xc)=1_n=\pi_V \circ \mathrm{ext}(X)\in \Fgen{0,0}{\T_0} .$$  

We now assume the claim is true for all $D'$ with $D> D' \geq  0 $ and prove the claim is true for $D$. We need to show that for every $\vec{r}\in \Sigma_D$ the function $Q^{(\vec{r})}$ is in $\Fgen{2D,D}{\T_D} $. We prove this yet again by induction, this time on the value of $r_1$: assume that $\vec{r}\in \Sigma_D$ and $r_1=0$.. Denote by $\tilde{r}$ the vector in $\Sigma_{D-1}$ defined by $$\tilde{r}=(r_2-1,r_3,\ldots,r_K) .$$ 
By the induction assumption on $D$, we know that $Q^{(\tilde r)}(\Xc)\in \Fnew{2(D-1),D-1,D-1} $ and so 
\begin{align*}q_a(X)&=\sum_{b=1}^n (\xc_a-\xc_b)\otimes Q^{(\tilde r)}_b(\Xc)=\sum_{b=1}^n (\xc_a-\xc_b)\otimes \xc_b^{\otimes r_2-1}\otimes \sum_{i_3,\ldots,i_K=1}^n \xc_{i_3}^{\otimes r_3}\otimes \ldots \otimes \xc_{i_K}^{\otimes r_K} \\
&=\left(\xc_a \otimes \sum_{b=1}^n Q^{(\tilde r)}_b(\Xc) \right)  -Q^{(\vec{r})}(\Xc)  
\end{align*}
is in $\Fgen{2D-1,D-1}{\T_D}$ by Lemma~\ref{lem:sub}, which is contained in  $\Fgen{2D-1,D}{\T_D}$ by Lemma~\ref{lem:include}. Since $\xc_a$ has zero mean, while $Q_a^{(\vec{r})}(\Xc)$ does not depend on $a$ since $r_1=0$, applying an averaging filter to $q_a $ gives us a constant value $-Q_a^{(\vec{r})}(\Xc)$  in each coordinate $a \in [n] $, and so $Q^{(\vec{r})}(\Xc)$ is in $\Fgen{2D,D}{\T_D}$. 

Now assume the claim is true for all $\vec{r}\in \Sigma_D$ which sum to $D$, and whose first coordinate is smaller than some  $r_1'\geq 1 $, we now prove the claim is true when the first coordinate of $\vec{r}$ is equal to $r_1'$. The vector $\tilde r=(r_2,\ldots,r_K)$ obtained from $\vec{r} $ by removing the first coordinate, sums to $D'=D-r_1'<D $, and so by the induction hypothesis on $D$ we know that $Q^{(\tilde{r})}\in \Fgen{2D',D'}{\T_{D'}} $. By Lemma~\ref{lem:sub} we obtain a function $q_a\in \Fgen{2D'+1,D'}{\T_D} \subseteq \Fgen{2D,D}{\T_D} $ defined by
\begin{align*}
q_a(X)&=\sum_{b=1}^n(\xc_a-\xc_b)^{\otimes r_1}\otimes Q_b^{(\tilde{r})}(\Xc)\\
&=\sum_{b=1}^n(\xc_a-\xc_b)^{\otimes r_1}\otimes \xc_b^{\otimes r_2} \otimes \sum_{i_3,\ldots,i_K=1}^n \xc_{i_3}^{\otimes r_3}\otimes \ldots \otimes \xc_{i_K}^{\otimes r_K} \\
&=Q_a^{(\vec{r})}(\Xc)+\text{additional terms}
\end{align*}
where the additional terms are linear combinations of functions of the form $P_DQ_a^{(r')}(\Xc) $ where $r'\in \Sigma_D$ and their first coordinate $r_1 $ is smaller than $r_1'$, and $P_D:\T_D \to \T_D$ is a permutation. By the induction hypothesis on $r_1$, each such $Q^{(r')}$  is in $\Fgen{2D,D}{\T_D} $. It follows that $P_DQ_a^{(r')}(\Xc), a=1,\ldots,n$, and thus $Q^{(\vec{r})}(\Xc)$, are in $ \Fgen{2D,D}{\T_D} $ as well. This concludes the proof of Theorem~\ref{thm:TFN}.  

\end{proof}

\section{Alternative TFN architecture }\label{app:alternative}

In this appendix we show that replacing the standard TFN convolutional layer with the layer defined in \eqref{eq:alternative}:
\begin{equation*}
\tilde V_a(X,V)=U(\vl_f,\vl_i)\left( \theta_1 \sum_{b=1}^n F(x_a-x_b) \otimes V_{b}+\theta_2 \sum_{b=1}^n F(x_a-x_b) \otimes V_{a} \right),
\end{equation*}
we can obtain $D$-spanning networks using $2D$ consecutive convolutions with $1$-filters (that is, filters in $W_{\vl_1}$, where $\vl_1=[0,1]^T$). Our discussion here is somewhat informal, meant to provide the general ideas without delving into the details as we have done for the standard TFN architecture in the proof of Theorem~\ref{thm:TFN}. In the end of our discussion we will explain what is necessary to make this argument completely rigorous.

We will only need two fixed filters for our argument here: The first is the $1$-filter $F_{Id}=(F^{(0)},F^{(1)}) $ defined by setting $R^{(0)}(\|x\|)=0 $ and $R^{(1)}(\|x\|)=\|x\| $ to obtain
$$F_{Id}(x)=\|x\|Y^1(\hat x)=\|x\|\hat x=x.$$ 
The second is the filter $F_1 $ defined by setting $R^{(0)}(\|x\|)=1 $ and $R^{(1)}(\|x\|)=0 $, so that 
$$F_1(x)=1 .$$

We prove our claim by showing that a pair of convolutions with $1$-filters can construct any  convolutional layer defined in \eqref{eq:minimal} for the  $D$-spanning architecture using tensor representations. The claim then follows from the fact that $D$ convolutions of the latter architecture suffice for achieving $D$-spanning, as shown in Lemma~\ref{lem:Dtensor}.

Convolutions for tensor representations, defined in \eqref{eq:minimal},  are composed of two terms: 
$$\tilde V_a^{\tensor,1}(\Xc,V)=\xc_a\otimes V_a \text { and } \tilde V_a^{\tensor,2}(\Xc,V)=\sum_{b=1}^n\xc_b\otimes V_b .$$

To obtain the first term $\tilde V_a^{\tensor,1}$, we set $\theta_1=0,\theta_2=1/n, F=F_{Id}$ in \eqref{eq:alternative} we obtain (the decomposition into irreducibles of) $\tilde V_a^{\tensor,1}(\Xc,V)=\xc_a\otimes V_a$. Thus this term can in fact be expressed by a single convolution. We can leave this outcome unchanged by a second convolution, defined by setting $\theta_1=0,\theta_2=1/n, F=F_1 $. 

To obtain the second term $\tilde V_a^{\tensor,2}$, we apply a first convolution with $\theta_1=-1, F=F_{Id}, \theta_2=0 $, to obtain
$$\sum_{b=1}^n (x_b-x_a)\otimes  V_b=\sum_{b=1}^n (\xc_b-\xc_a)\otimes  V_b=\tilde V_a^{\tensor,2}(V,\Xc)-\xc_a\otimes \sum_{b=1}^n V_b $$
By applying an additional averaging filter, defined by setting $\theta_1=\frac{1}{n}, F=F_1, \theta_2=0 $, we obtain $\tilde V_a^{\tensor,2}(V,\Xc) $. This concludes our `informal proof'. 

Our discussion here has been somewhat inaccurate, since in practice $F_{Id}(x)=(0,x)\in W_0\oplus W_1 $ and $F_1(x)=(1,0)\in W_0 \oplus W_1 $. Moreover, in our proof we have glossed over the multiplication by the unitary matrix used to obtain decomposition into irreducible representations. However the ideas discussed here can be used to show that $2D$ convolutions with $1$-filters can satisfy the sufficient condition for $D$-spanning defined in Lemma~\ref{lem:Dspan}. See our treatment of Theorem~\ref{thm:TFN} for more details. 
\section{Comparison with original TFN paper}\label{app:diff}
In this Appendix we discuss three superficial differences between the presentation of the TFN architecture in \cite{thomas2018tensor} and our presentation here:
\begin{enumerate}
	\item We define convolutional layers between features residing in direct sums of irreducible representations, while \citep{thomas2018tensor} focuses on features which inhabit a single irreducible representation. This difference is non-essential, as  direct sums of irreducible representations can be represented as multiple channels where each feature inhabits a single irreducible representation. 
	\item The term $\theta_0V_a$ in \eqref{eq:conv} appears in \citep{fuchs2020se}, but does not appear explicitly in \citep{thomas2018tensor}. However it can be obtained by concatenation of the input of a  self-interaction layer to the output, and then applying a self-interaction layer. 
	\item We take the scalar functions $R^{(\ell)}$ to be polynomials, while \citep{thomas2018tensor} take them to be fully connected networks composed with radial basis functions. Using polynomial scalar bases is convenient for our presentation here since it enables exact expression of equivariant polynomials. Replacing polynomial bases with fully connected networks, we obtain approximation of equivariant polynomials instead of exact expression. It can be shown that if $p$ is a $G$-equivariant polynomial which can be expressed by some network $\F_{C,D}$ defined with filters coming from a polynomial scalar basis, then $p$ can be approximated on a compact set $K$, up to an arbitrary $\epsilon$ error, by a similar network with scalar functions coming from a sufficiently large fully connected network. 
\end{enumerate}

\section{Tensor Universality}\label{app:tensor_universality}
In this section we show how to construct the complete set $\Fpool$ of linear $\SO$ invariant functionals from $\Wfeat^{\T}=\bigoplus_{T=0}^D \T_T$ to $\RR$. Since each such functional $\Lambda$ is of the form 
$$\Lambda(w_0,\ldots,w_D)=\sum_{T=0}^D \Lambda_T(w_T) ,$$
where each $\Lambda_T$ is $\SO$-invariant, it is sufficient to characterize all linear $\SO$-invariant functionals $\Lambda:\T_D \to \RR $.

It will be convenient to denote
$$W=\RR^3 \text{ and } W^{\otimes D}\cong\RR^{3^D}=\T_D. $$
We achieve our characterization using the bijective correspondence between linear functional $\Lambda:W^{\otimes D} \to \RR$ and multi-linear functions  $\tilde \Lambda:W^D \to \RR $: each such $\Lambda$ corresponds to a unique $\hat \Lambda$, such that
\begin{equation}\label{eq:corr}
\tilde\Lambda(e_{i_1}, \ldots ,e_{i_D})=\Lambda(e_{i_1}\otimes \ldots \otimes e_{i_D}), \,  \forall (i_1,\ldots,i_D)\in [3]^D ,
\end{equation}
where $e_1,e_2,e_3$ denote the standard basis elements of $\RR^3$. We define a spanning set of equivariant linear functionals on $W^{\otimes D}$ via a corresponding characterization for multi-linear functionals on $W^D$. Specifically, set 
$$K_D=\{k\in \NNplus| \, D-3k \text{ is even and non-negative. } \} $$
For $k \in K_D $ we define a multi-linear functional:
\begin{align}\label{eq:det}
\tilde{\Lambda}_k(w_1,\ldots,w_D)&=\det(w_1,w_2,w_3) \times \ldots  \times \det(w_{3k-2},w_{3k-1},w_{3k}) \times  \langle w_{3k+1},w_{3k+2}\rangle \times  \ldots \nonumber \\
 & \times \langle w_{D-1},w_{D}\rangle ,
\end{align}
and for $(k,\sigma) \in K_D \times S_D $ we  define 
\begin{equation}\label{eq:multilinear}
\tilde \Lambda_{k,\sigma}(w_1,\ldots,w_D)=\tilde \Lambda_k(w_{\sigma(1)},\ldots,w_{\sigma(D)}) 
\end{equation}
\begin{prop}\label{prop:tensorpooling}
The space of linear invariant functions from $\T_D$ to $\RR$ is spanned by the set of linear invariant functionals $\lambda_D=\{\Lambda_{k,\sigma}| \, (k,\sigma)\in K_D \times S_D \}$ induced by the multi-linear functional $\tilde \Lambda_{k,\sigma}$ described in \eqref{eq:det} and  \eqref{eq:multilinear} 
\end{prop}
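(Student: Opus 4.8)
The plan is to reduce the statement to the First Fundamental Theorem (FFT) of classical invariant theory for $\SO$. First I would exploit the correspondence \eqref{eq:corr} between linear functionals $\Lambda:W^{\otimes D}\to\RR$ and multilinear functionals $\tilde\Lambda:W^D\to\RR$, and observe that it intertwines the relevant actions: $R\in\SO$ acts on $W^{\otimes D}$ by $R^{\otimes D}$ and on $W^D$ diagonally, so $\Lambda$ is $\SO$-equivariant (equivalently $\SO$-invariant, the target being the trivial representation) if and only if $\tilde\Lambda$ is $\SO$-invariant as a function of $D$ vectors. Hence it suffices to prove that the space of $\SO$-invariant multilinear functionals on $W^D$ is spanned by the $\tilde\Lambda_{k,\sigma}$ of \eqref{eq:det}--\eqref{eq:multilinear}.

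Next I would identify this space with a multigraded piece of an invariant ring. Polynomial functions on $W^D=(\RR^3)^D$ carry the grading by degree in each of the $D$ vector arguments, and the multilinear functionals are exactly the component of multidegree $(1,1,\ldots,1)$. The FFT for $\SO$ (Weyl's classical groups; see also \citep{fulton2013representation}) states that the algebra of $\SO$-invariant polynomials on $(\RR^3)^D$ is generated by the inner products $\langle w_i,w_j\rangle$, $1\le i\le j\le D$, and the $3\times3$ determinants $\det(w_i,w_j,w_k)$, $1\le i<j<k\le D$. Each generator is multi-homogeneous, of multidegree $e_i+e_j$ (or $2e_i$) for an inner product and $e_i+e_j+e_k$ for a determinant; therefore the multidegree-$(1,\ldots,1)$ component of the invariant ring is spanned by products of these generators whose multidegrees sum to $(1,\ldots,1)$.

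Then I would read off the combinatorics. Such a product cannot use a self-inner-product $\langle w_i,w_i\rangle$ and cannot repeat an index, so it corresponds to a partition of $\{1,\ldots,D\}$ into $k$ unordered triples (the determinants) and $m$ unordered pairs (the inner products) with $3k+2m=D$; such a partition exists precisely when $D-3k$ is even and nonnegative, i.e. $k\in K_D$. For a fixed $k$, every partitioned product is obtained from $\tilde\Lambda_k$ in \eqref{eq:det} by relabeling the arguments by some $\sigma\in S_D$, i.e. it equals $\tilde\Lambda_{k,\sigma}$. (Distinct $\sigma$ may yield the same functional, and there are linear relations among the $\tilde\Lambda_{k,\sigma}$ — e.g. the classical syzygy rewriting a product of two determinants in terms of inner products — but since we only claim a spanning set, not a basis, this is harmless.) Pushing everything back through \eqref{eq:corr} shows $\{\Lambda_{k,\sigma}:(k,\sigma)\in K_D\times S_D\}$ spans the space of linear $\SO$-invariant functionals $\T_D\to\RR$, as claimed.

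I expect the main obstacle to be the appeal to the FFT for $\SO$ acting on tuples of vectors: a fully self-contained argument would require proving that theorem, typically by first establishing the $O(3)$ case (via polarization, reducing to the inner-product generators) and then handling $\SO$ using the sign behavior of determinants under reflections. If the FFT may be cited this step is immediate; otherwise it is the one place where genuine work is hidden, while the correspondence, the multigrading argument, and the partition count are all routine.
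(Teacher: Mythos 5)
Your proposal is correct and follows essentially the same route as the paper's proof: the linear/multilinear correspondence intertwining the $R^{\otimes D}$ and diagonal actions, the appeal to the first fundamental theorem for $\SO$ (the paper cites \citep{kraft2000classical} for the generators $\det(w_i,w_j,w_k)$ and $\langle w_i,w_j\rangle$), and the restriction to the multilinear component to obtain the $\tilde\Lambda_{k,\sigma}$. Your explicit multigrading argument is a slightly more careful justification of the paper's final step, where it simply asserts that multilinear invariants are spanned by products in which each $w_i$ appears exactly once.
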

We note that (i) \eqref{eq:corr} provides a (cumbersome) way to compute all linear invariant functionals $\Lambda_{k,\sigma}$ explicitly by evaluating the corresponding $\tilde\Lambda_{k,\sigma}$ on the $3^D$ elements of the standard basis and (ii) the set $\lambda_D$ is spanning, but is not linearly independent. For example, since $\langle w_1,w_2 \rangle=\langle w_2,w_1 \rangle$, the space of $\SO$ invariant functionals on $\T_2=W^{\otimes 2}$ is one dimensional while $|\lambda_2|=2 $.    
\begin{proof}[Proof of Proposition~\ref{prop:tensorpooling}]
We first show that the bijective correspondence between linear functional $\Lambda:W^{\otimes D} \to \RR$ and multi-linear functions  $\tilde \Lambda:W^D \to \RR $, extends to a bijective correspondence between $\SO$-invariant linear/multi-linear functionals. The action of $\SO$ on $W^D $ is defined by
$$\tilde \rho(R)(w_1,\ldots,w_D)=(Rw_1,\ldots,Rw_D) .$$
The action $\rho(R)=R^{\otimes D}$ of $\SO$ on $W^{\otimes D}$ is such that the map 
$$(w_1,\ldots,w_D)\mapsto w_1\otimes w_2\ldots w_D $$
is $\SO$- equivariant. It follows that if $\tilde \Lambda$ and $\Lambda$ satisfy \eqref{eq:corr}, then for all $R \in \SO$, the same equation holds for the pair $\tilde \Lambda \circ \tilde \rho(R) $ and $ \Lambda \circ  \rho(R) $. Thus $\SO$-invariance of $\tilde \Lambda$ is equivalent to $\SO$-invariance of $\Lambda$. 

Multi-linear functionals on $W^D$ invariant to $\tilde \rho$ are a subset of the set of  polynomials on $W^D$ invariant to $\tilde \rho$. It is known (see \citep{kraft2000classical}, page 114),  that all such polynomials are algebraically generated by functions of the form 
$$\det(w_{i_1},w_{i_2},w_{i_3}) \text{ and } \langle w_{j_1},w_{j_2}\rangle, \text{ where } i_1,i_2,i_3,j_1,j_2 \in [D] .$$
Equivalently, $\SO$-invariant polynomials are spanned by linear combinations of polynomials of the form
\begin{equation}\label{eq:linspan}
\det(w_{i_1},w_{i_2},w_{i_3})\det(w_{i_4},w_{i_5},w_{i_6})\ldots  \langle w_{j_1},w_{j_2}\rangle \langle w_{j_3},w_{j_4}\rangle \ldots .
\end{equation}
When considering the subset of multi-linear invariant polynomials, we see that they must be spanned by polynomials as in \eqref{eq:linspan}, where each $w_1,\ldots,w_D $ appears exactly once in each 
polynomial in the spanning set. These precisely correspond to the functions in $\lambda_{D}$. 

\end{proof}

%
%
%
%
%
%
%
%
%
%
%

\end{document}